\DeclareMathOperator{\spn}{span}
\DeclareMathOperator{\prox}{\mathbf{prox}}
\ifwacvfinal\pagestyle{empty}\fi
\begin{document}

\title{Understanding Kernel Size in Blind Deconvolution}

\author{Li Si-Yao\\
Beijing Normal University\\
{\tt\small lisiyao@mail.bnu.edu.cn}
\and
Dongwei Ren \\
Tianjin University\\
{\tt\small rendongweihit@gmail.com}
\and
Qian Yin* \\
Beijing Normal University\\
{\tt\small yinqian@bnu.edu.cn}
}

\maketitle
\ifwacvfinal\thispagestyle{empty}\fi

\begin{abstract}
Most blind deconvolution methods usually pre-define a large kernel size to guarantee the support domain. Blur kernel estimation error is likely to be introduced, yielding severe artifacts in deblurring results. In this paper, we first theoretically and experimentally analyze the mechanism to estimation error in oversized kernel, and show that it holds even on blurry images without noises. Then to suppress this adverse effect, we propose a low rank-based regularization on blur kernel to exploit the structural information in degraded kernels, by which larger-kernel effect can be effectively suppressed. And we propose an efficient optimization algorithm to solve it. 
Experimental results on benchmark datasets show that the proposed method is comparable with the state-of-the-arts by accordingly setting proper kernel size, and performs much better in handling larger-size kernels quantitatively and qualitatively. The deblurring results on real-world blurry images further validate the effectiveness of the proposed method. 
\end{abstract}

\section{Introduction}
Blind deconvolution is a fundamental problem in low level vision, and is always drawing research attentions \cite{perrone2014total,pan2016robust,krishnan2011blind,krishnan2009fast,pan2016blind}.
Given a blurry image $\mathbf y$, blind deconvolution aims to recover a clear version $\bf x$, in which it is crucial to first estimate blur kernel $\bf k$ successfully. 
Formally, the degradation of image blur is modeled as
\begin{equation}\label{eq1}
  \mathbf y= \mathbf x \otimes \mathbf k+ \mathbf n,
\end{equation}
where $\mathbf x$ and $\mathbf y$ are with size $M\times N$, $\mathbf k$ is with size $L\times K$,
$\otimes$ is the 2D convolution operator and $\mathbf n$ is usually assumed as random Gaussian noises.
Blind deconvolution needs to jointly estimate blur kernel $\bf k$ and recover clear image $\bf x$.

The most successful blind deconvolution methods are based on the maximum-a-posterior (MAP) framework.
MAP tries to jointly estimate $\mathbf k$ and $\mathbf x$ by maximizing the posterior $p(\mathbf k, \mathbf x|\mathbf y)$, which can be further reformulated as an optimization on regularized least squares~\cite{chan1998total},
\begin{equation}\label{eq2}
  \mathbf{\hat x}, \mathbf{\hat k}=\arg\min_{\mathbf x, \mathbf k}{\left(\|
  \mathbf x\otimes \mathbf k-\mathbf y\|^2+\lambda g\left(\mathbf x\right)+\sigma h\left(\mathbf k\right)\right)}
\end{equation}
where $g$ and $h$ are prior functions designed to prefer a sharp image and an ideal kernel, respectively.
It is not trivial to solve the optimization problem in Eqn. \eqref{eq2}, and instead it is usually addressed as alternate steps,
\begin{equation}\label{eq3}
  \mathbf {\hat{x}}^{(i+1)} = \arg\min_{\mathbf x}{\left(\|\mathbf x \otimes \mathbf {\hat{k}}^{(i)}-\mathbf y\|^2+\lambda g\left(\mathbf x\right)\right)}
\end{equation}
and
\begin{equation}\label{eq4}
  \mathbf{\hat{k}}^{(i+1)} = \arg\min_{\mathbf{k}}{\left(\|\mathbf{\hat{x}}^{(i+1)}\otimes \mathbf k- \mathbf y\|^2+\sigma h\left(\mathbf k\right)\right)}.
\end{equation}
In the most blind deconvolution methods, kernel size $(L, K)$ is hyper-parameters that should be manually set.
An ideal choice is the ground truth size to constrain the support domain, which however is not available in practical applications, requiring hand-crafted tuning. 

On one hand, a smaller kernel size than ground truth cannot provide enough support domain for estimated blur kernel.
Therefore, kernel size in the existing methods is usually pre-defined as a large value to guarantee support domain.
%
%
%
%
%
\begin{figure*}[t]
  \centering
  \setlength{\tabcolsep}{2pt}
  \small
  \begin{tabular}{ccccc}

   \multirow{1}{*}[50pt]{\includegraphics[width=.3\linewidth]{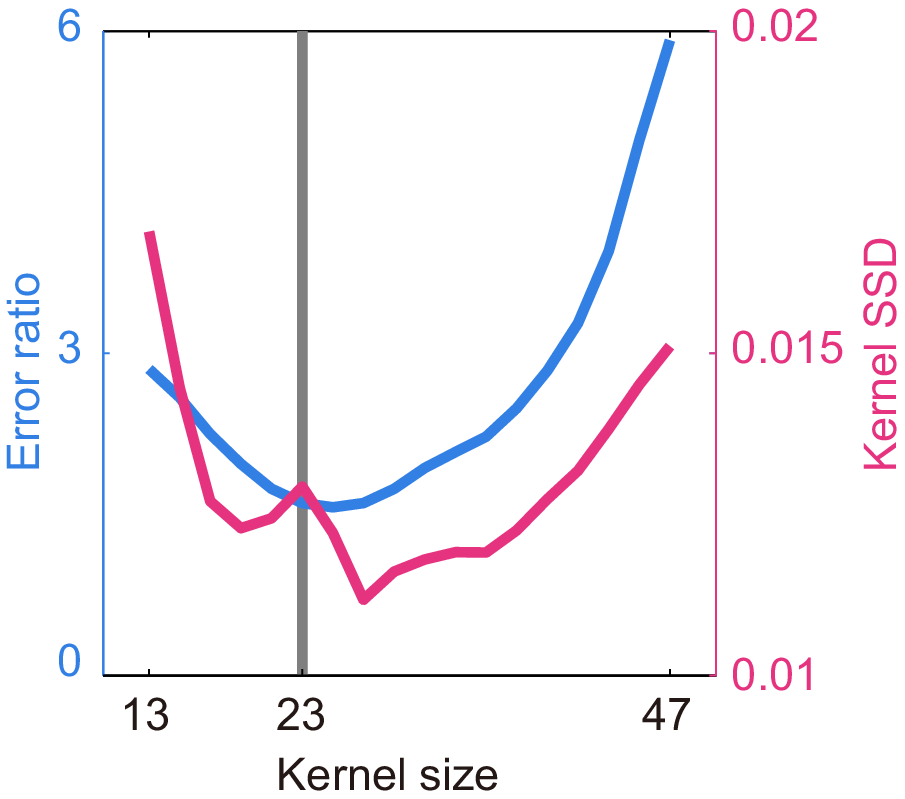}} &
 \includegraphics[width=.155\linewidth]{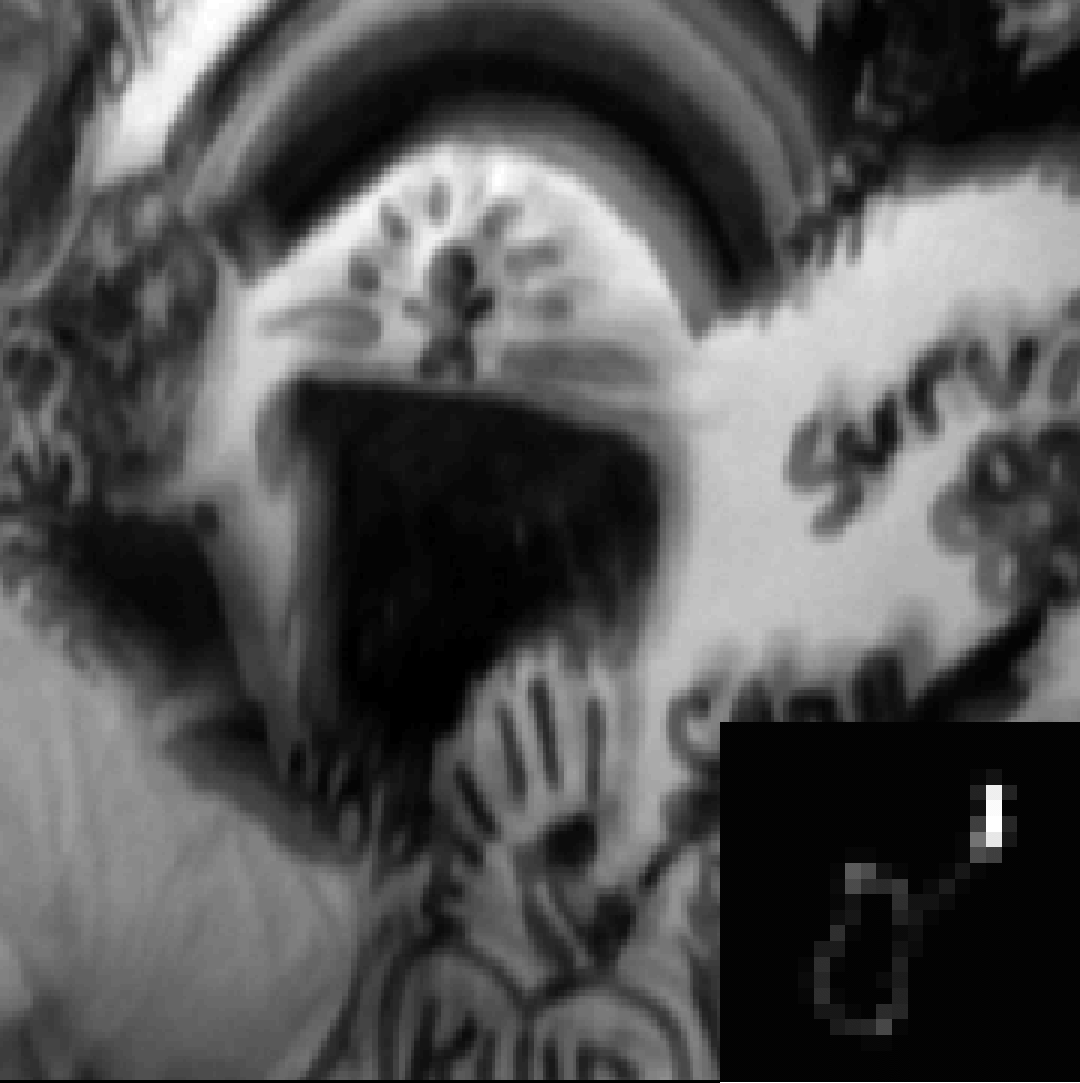} &
\includegraphics[width=.155\linewidth]{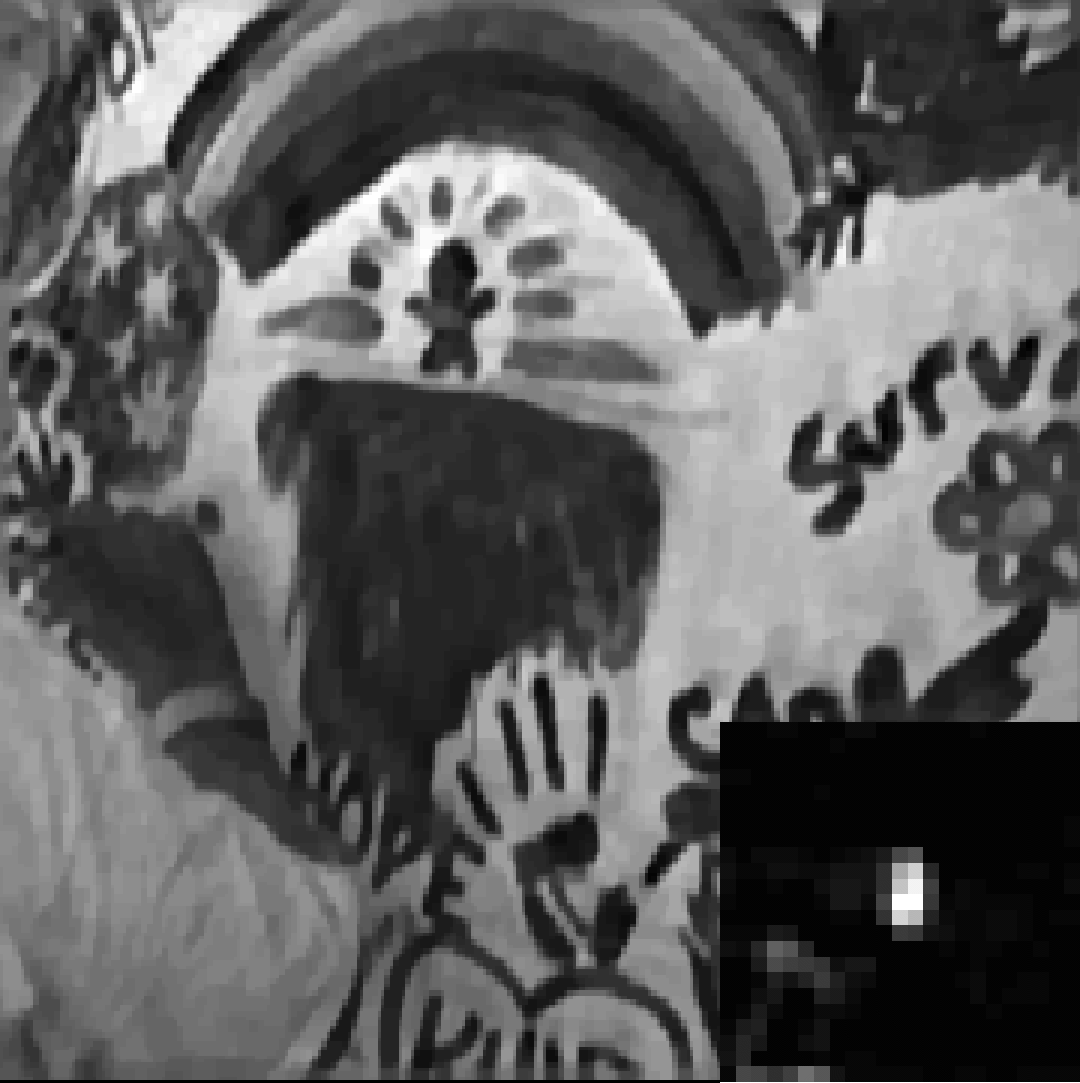} &
\includegraphics[width=.155\linewidth]{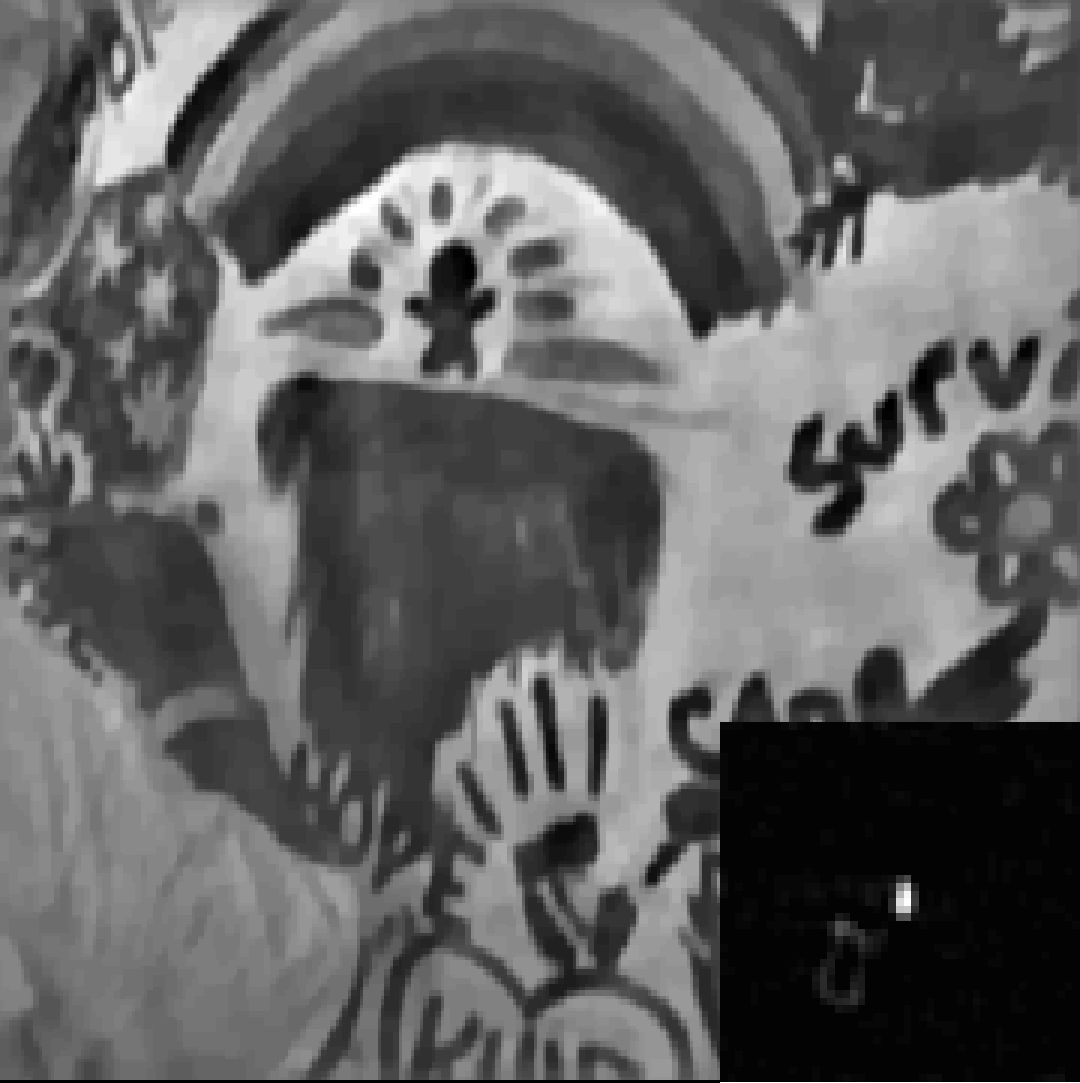} &
\includegraphics[width=.155\linewidth]{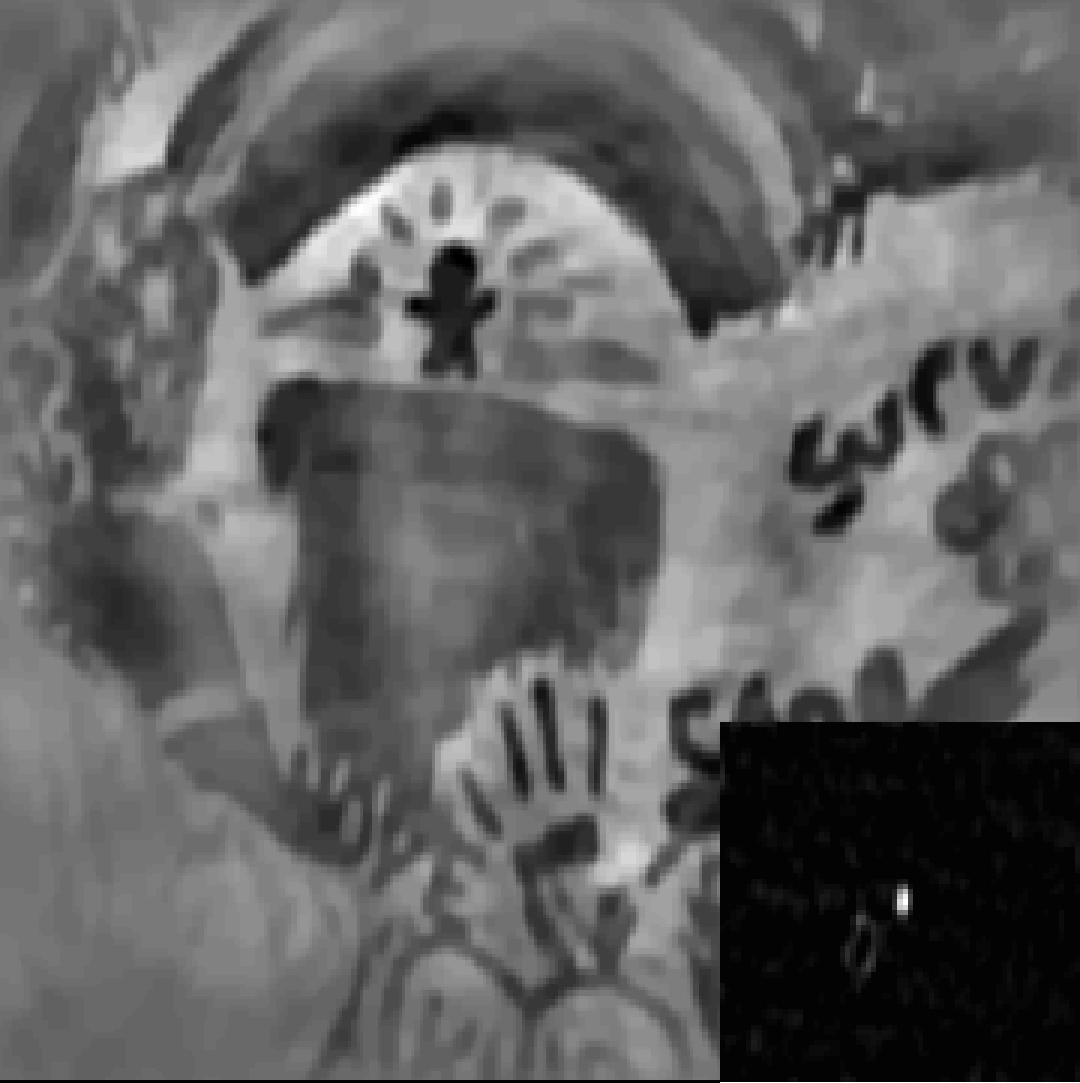} \vspace{-2pt} \\

  &\includegraphics[width=.155\linewidth]{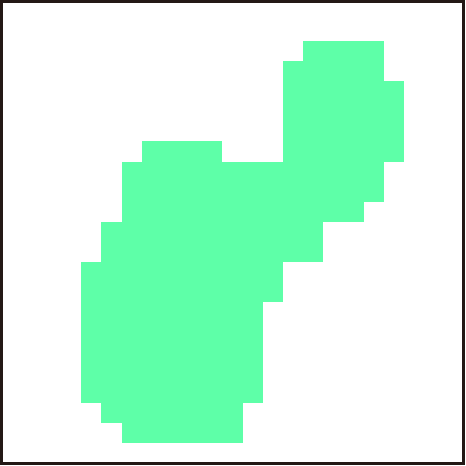} &
\includegraphics[width=.155\linewidth, trim={86.5pt 87.5pt 18pt 16pt}, clip]{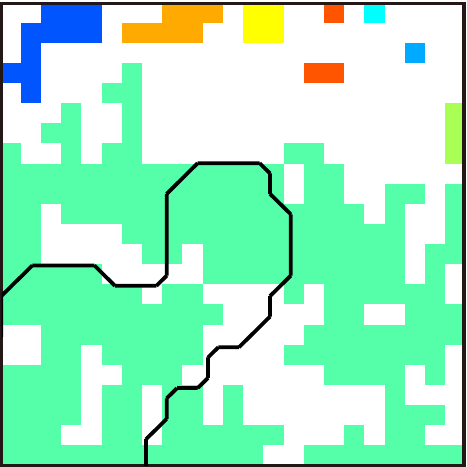} &
\includegraphics[width=.155\linewidth]{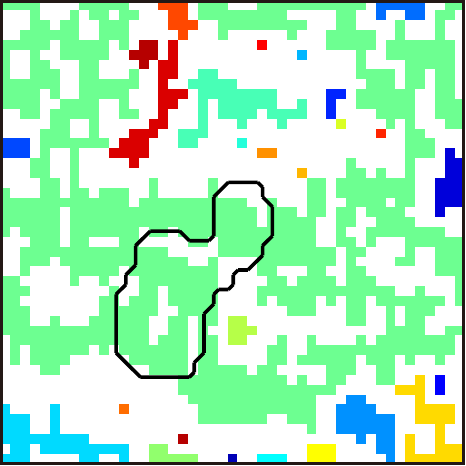} &
\includegraphics[width=.155\linewidth]{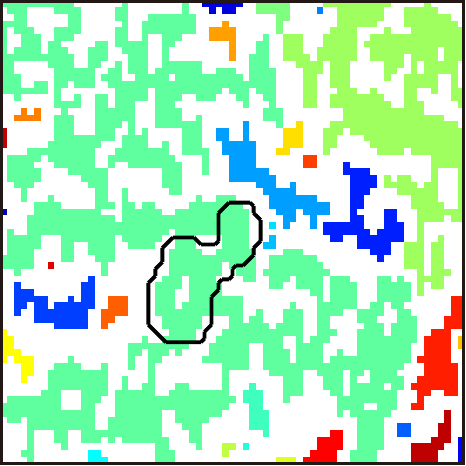} \vspace{-2pt} \\

& truth size = 23 & size = 23, err = 1.9 & size = 47, err = 5.9& size = 69, err = 69.9 \vspace{2pt}\\

(a) & (b) & (c) & (d) & (e) \vspace{10pt} \\
   
\end{tabular}\\
  \caption{Large kernels produce inferior results. (a) Numerical errors with kernel size.
  (b) Blurry image and ground truth kernel. (c-e) Deblurred results. In the first row of (c-e) are restored images and corresponding estimated kernels; in the second row are support domains ($k>0$), where adjacent positive pixels are colored identically and zeros are white. 
  In this experiment, we omitted regularization $h$; 
  hence, k-step equals to a bare least squares optimization.
We also avoided using multi-scaling scheme and threshold in this experiment. Parameters that performed well on the truth size were kept identical for larger sizes during the experiment.}\label{fig1}
\end{figure*}
On the other hand, as shown in Figure~\ref{fig1}, oversized kernels are very likely to introduce estimation errors, and hence lead to unreasonable results.
Hereby, we name this phenomenon \textit{larger-kernel effect}. 
This interesting fact was first mentioned by Fergus~\etal~\cite{fergus2006removing}.
Then Cho and Lee~\cite{cho2016convergence} showed a similar result that the residual cost of (\ref{eq2}) increases with over-estimated kernel size.
However, such annoying phenomenon was not well analyzed and studied yet.
Note that most MAP-based blind deconvolution algorithms adopt the trial-and-error strategy to tune kernel size, so the larger-kernel effect is a very common problem. 
%

In this paper, we first explore the mechanism of larger-kernel effect and then propose a novel low rank-based regularization to relieve this adverse effect.
%
%
Theoretically, we analyze the mechanism to introduce kernel estimation error in oversized kernel size. 
Specifically, we reformulate convolution of (\ref{eq3}) and (\ref{eq4}) to affine transformations and analyze their properties on kernel size. 
%
We show that for $\mathbf x$ in sparse distributions, this larger-kernel effect remains with probability one. 
We also conduct simulation experiments to show that kernel error is expected to increase with kernel size even without noise $\mathbf n$.
Furthermore, we attempt to find out a proper regularization to suppress noise in large kernels. 
By exploiting the low rank property of blur kernels, we propose a low-rank regularization to reduce noises in $\mathbf{\hat k}$, suppressing larger-kernel effect.
Experimental results on both synthetic and real blurry images validate the effectiveness of the proposed method, and show its robustness to against over-estimated kernel size. Our contributions are two-folds:
\begin{itemize}
\item
  We give a thorough analysis to mechanism of the phenomenon that over-estimated kernel size yields inferior results in blind deconvolution, on which little research attention has been paid.
\item
 We propose a low rank-based regularization to effectively suppress \textit{larger-kernel effect} along with efficient optimization algorithm, and performs favorably on oversized blur kernel than state-of-the-arts. 

\end{itemize}

\section{Larger-kernel effect}
In this section, we describe the \textit{larger-kernel effect} in detail and provide a mathematical explanation.

\subsection{Phenomenon}
In Figure ~\ref{fig1}(b-c), it has shown that the larger the kernel size would lead to more inferior deblurring results, since the estimated blur kernel with larger support domain is very likely to introduce noises and estimation errors. 
Figure ~\ref{fig1}(a) shows both the error ratio (err)~\cite{levin2009understanding} of restored images and the Summed Squared Difference (SSD) of estimated kernels reach the lowest at the truth size and increase afterwards. 

\subsection{Mechanism}
To analyze the source of larger-kernel effect, we firstly introduce an interesting fact that we call \textit{inflating effect}.

\newtheorem{thm}{Claim}
\begin{thm}(Inflating Effect)
Let $ A = [\bm{v_1} \ldots \bm{v_n}]$, where $\bm{v_i} \in \mathbb{R}^m$ $(m \ge n + 1)$.
Let $ B = [\bm{w_1} \: A \: \bm{w_2}]$, where $ \bm{w_1}, \bm{w_2} \in \mathbb{R}^m $ and
$\mathrm{rank}(B)>\mathrm{rank}(A)$. Given an m-D random vector $\bm{b}$ whose elements
are i.i.d. with the continuous probability density function p, for $ \bm{u} \in \mathbb{R}^m$
$$ \Pr\big(\inf\{\|B\bm{u}-\bm{b}\|^2\}<\inf\{\|A\bm{u}-\bm{b}\|^2\}\big) = 1.$$
\end{thm}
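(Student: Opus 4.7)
The plan is to read each infimum as a squared distance to a column space, decompose via Pythagoras using the nesting of those spaces, and then appeal to the standard fact that a random vector with continuous joint density almost surely avoids any proper linear subspace of $\mathbb R^m$.

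First, I would set $V_A := \mathrm{col}(A)$ and $V_B := \mathrm{col}(B)$. The standard normal-equation argument gives
\[
\inf_{\bm u}\|A\bm u-\bm b\|^2 = \|\bm b - P_{V_A}\bm b\|^2,\qquad \inf_{\bm u}\|B\bm u-\bm b\|^2 = \|\bm b - P_{V_B}\bm b\|^2,
\]
where $P_{V_A}, P_{V_B}$ are the orthogonal projections. (I would also note in passing that the statement's ``$\bm u \in \mathbb R^m$'' is a mild abuse of notation: really $\bm u \in \mathbb R^n$ on the left and $\bm u \in \mathbb R^{n+2}$ on the right.) Because the columns of $A$ appear among those of $B$, we have $V_A \subseteq V_B$, and the rank hypothesis upgrades this to the strict inclusion $V_A \subsetneq V_B$. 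Writing $W := V_B \cap V_A^\perp$ for the orthogonal complement of $V_A$ inside $V_B$, I obtain $\dim W = \mathrm{rank}(B) - \mathrm{rank}(A) \ge 1$.

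Next, the inclusion $V_A \subseteq V_B$ lets me invoke Pythagoras in the form
\[
\|\bm b - P_{V_A}\bm b\|^2 \;=\; \|\bm b - P_{V_B}\bm b\|^2 + \|P_W \bm b\|^2,
\]
so the strict inequality in the claim becomes exactly the event $\{P_W \bm b \neq 0\} = \{\bm b \notin W^\perp\}$. Since $\dim W \ge 1$, the subspace $W^\perp$ is a proper linear subspace of $\mathbb R^m$ and in particular has Lebesgue measure zero. Because the entries $b_1, \ldots, b_m$ are i.i.d.\ with continuous density $p$, the law of $\bm b$ has joint density $\prod_i p(b_i)$ with respect to Lebesgue measure on $\mathbb R^m$ and therefore assigns probability zero to every Lebesgue-null set. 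Hence $\Pr(\bm b \notin W^\perp) = 1$, which is the claim.

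I do not expect a genuine obstacle; the argument is pure linear algebra plus one ``measure-zero subspace'' fact. The one step that deserves care is the passage from the rank hypothesis to $\dim W \ge 1$, since without it $W^\perp = \mathbb R^m$ and the probability of the event would collapse to zero. Everything else — the projection formula for least squares, the Pythagorean decomposition across a nested pair of subspaces, and the absolute continuity of a product of continuous one-dimensional densities — is routine.
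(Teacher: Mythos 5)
Your proof is correct, and it is actually tighter than the paper's own argument. The paper proceeds by asserting that the complement event $\{\inf\|B\bm u-\bm b\|^2 \ge \inf\|A\bm u-\bm b\|^2\}$ coincides with $\{\bm b \in \mathbb{R}^m \setminus (\spn\{B\}\setminus\spn\{A\})\}$ and then claims this set has Lebesgue measure zero; taken literally, both steps are shaky, since $\spn\{B\}\setminus\spn\{A\}$ is not the right object (a vector $\bm b$ far outside $\spn\{B\}$ but with a nonzero component along the new directions of $B$ still gives strict inequality), and the complement of a subset of a subspace generally has full, not zero, measure. What the paper is gesturing at is exactly what you prove: writing $W = V_B \cap V_A^{\perp}$, the Pythagorean identity $\|\bm b - P_{V_A}\bm b\|^2 = \|\bm b - P_{V_B}\bm b\|^2 + \|P_W\bm b\|^2$ shows the equality event is precisely $\{\bm b \in W^{\perp}\}$, and the rank hypothesis forces $\dim W \ge 1$, so $W^{\perp}$ is a proper subspace and hence null for the absolutely continuous law of $\bm b$. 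Both arguments ultimately rest on the same fact --- a random vector with a joint density almost surely misses a proper linear subspace --- but your projection-based decomposition identifies the exceptional event exactly, whereas the paper's set-theoretic shortcut misstates it; your version is the one that should be read as the rigorous proof. Your remark that $\bm u$ should live in $\mathbb{R}^n$ (resp.\ $\mathbb{R}^{n+2}$) rather than $\mathbb{R}^m$ is also a correct reading of a typo in the statement.
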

\begin{proof}
\begin{equation*}
\begin{aligned}
   &&&\Pr\big(\inf\{\|B\bm{u}-\bm{b}\|^2\}\ge \inf\{\|A\bm{u}-\bm{b}\|^2\}\big) \\
 &=&&\Pr\big(\bm{b}\in \mathbb{R}^m \setminus (\spn\{B\}\setminus \spn\{A\})\big) \\
   &=&&\int_\Omega {d {p(\bm{b})}}
\end{aligned}
\end{equation*}
where $\Omega = \mathbb{R}^m \setminus (\spn\{B\}\setminus \spn\{A\})$.

For $\mathrm{rank}(B)>\mathrm{rank}(A)$, we have $\mathrm{dim}(\spn\{B\}\setminus \spn\{A\}) > 0$. Hence, the Lebesgue measure of
$\Omega$ is zero, and the probability is zero.
\end{proof}

Claim 1 shows that padding linear independent columns to a thin matrix leads to a different least squares solution with lower residue squared cost.

The convolution part in (\ref{eq1}) is equivalent to linear transforms:
\begin{equation}\label{eq6}
  \bm y = \mathbf T_{\mathbf{k}}\bm{x} + \bm n = \mathbf T_{\mathbf{x}}\bm{k} + \bm n.
\end{equation}
where italic letters $\bm y, \bm x$, $ \bm k $ and $\bm n $ represent column-wise expanded vectors of 2D $\mathbf y, \mathbf x$, $ \mathbf k $ and $\mathbf n $, respectively; $\mathbf T_{\mathbf k}\in\mathbb{R}^{MN\times MN}$ and $\mathbf{T_{\mathbf x}}\in \mathbb{R}^{MN\times LK}$ are blocked banded Toeplitz matrices~\cite{andrews1977digital, gray2006toeplitz}; $L$ and $K$ are required to be odd.

We attribute the larger-kernel effect to either substep  \eqref{eq3} or \eqref{eq4}. On one hand, $\mathbf T_{\mathbf k}$ remains identical when $L$ and $K$ increase by wrapping a layer of zeros around $\mathbf k$ and the result of x-step keeps the same. Hence, x-step should not be blamed as the source of the larger-kernel effect. On the other hand, when $\mathbf k$ is larger, $\mathbf T_{\mathbf x}$ will become inflated for the same $\mathbf x$. In 1D cases, where $N = K = 1$, assume $L = 2l + 1$, then
\begin{equation}\label{eq_Tx}
\begin{aligned}
  &\mathbf T_{\bm{x}}(L) \\= &\left[\begin{matrix}
x_{l+1}&   \cdots&  x_2&    x_1&    0&      \cdots& 0       \\
\vdots &    &       \vdots& x_2&    x_1&    \ddots& \vdots  \\
x_{M-1}&    &       \vdots& \vdots& x_2&    \ddots& 0       \\
x_M&        \ddots& \vdots& \vdots& \vdots& \ddots& x_1       \\
0&        \ddots& x_{M-1}& \vdots& \vdots& & x_2       \\
\vdots&       \ddots& x_M& x_{M-1}& \vdots& & \vdots       \\
0&        \cdots& 0& x_M& x_{M-1}&  \cdots& x_{M-l}
\end{matrix}\right] 
\end{aligned}
\end{equation}

During blind deconvolution iterations, for identical values of $\bm{\hat x}^{(i)}$, a larger $L$ introduces more columns onto both sizes of $\mathbf T_{\bm{\hat x}^{(i)}}$ and results in different
solutions. To illustrate this point, we tested a 1D version of blind deconvolution without kernel regularization and took different values of $L$ (truth and double and four times the truth size) for the 50th
 k-step optimization after 49 truth-size iterations (see Figure ~\ref{fig2}). Figure ~\ref{fig2}(a-c) show that the optimal solutions in different sizes differ slightly on the main body
 that lies within the ground truth size (colored in red), but greatly outside this range (colored in green) where zeros are expected. Figure ~\ref{fig2}(d-f) compare
   ground truth to estimated kernels in (a-c) after non-negativity and sum-to-one projections.
 Larger sizes yield more  positive noises; hence, they lower the
   weight of the main body after projections and change the outlook of estimated kernel.

\begin{figure}[t]
\begin{center}
   \setlength{\tabcolsep}{5pt}
  \small
  \begin{tabular}{cccc}

 &  \includegraphics[width=.4\linewidth]{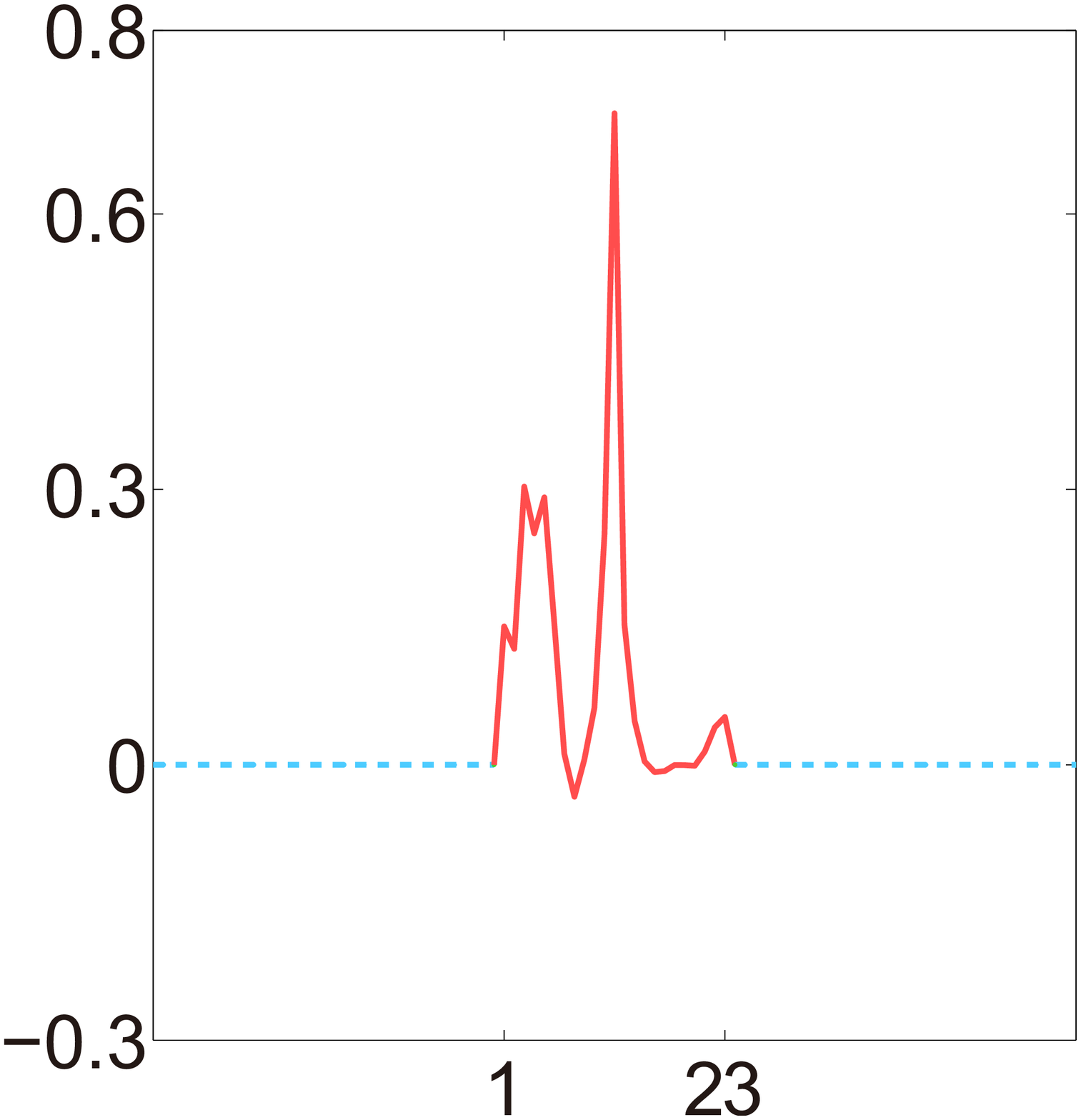} & 
 \includegraphics[width=.4\linewidth]{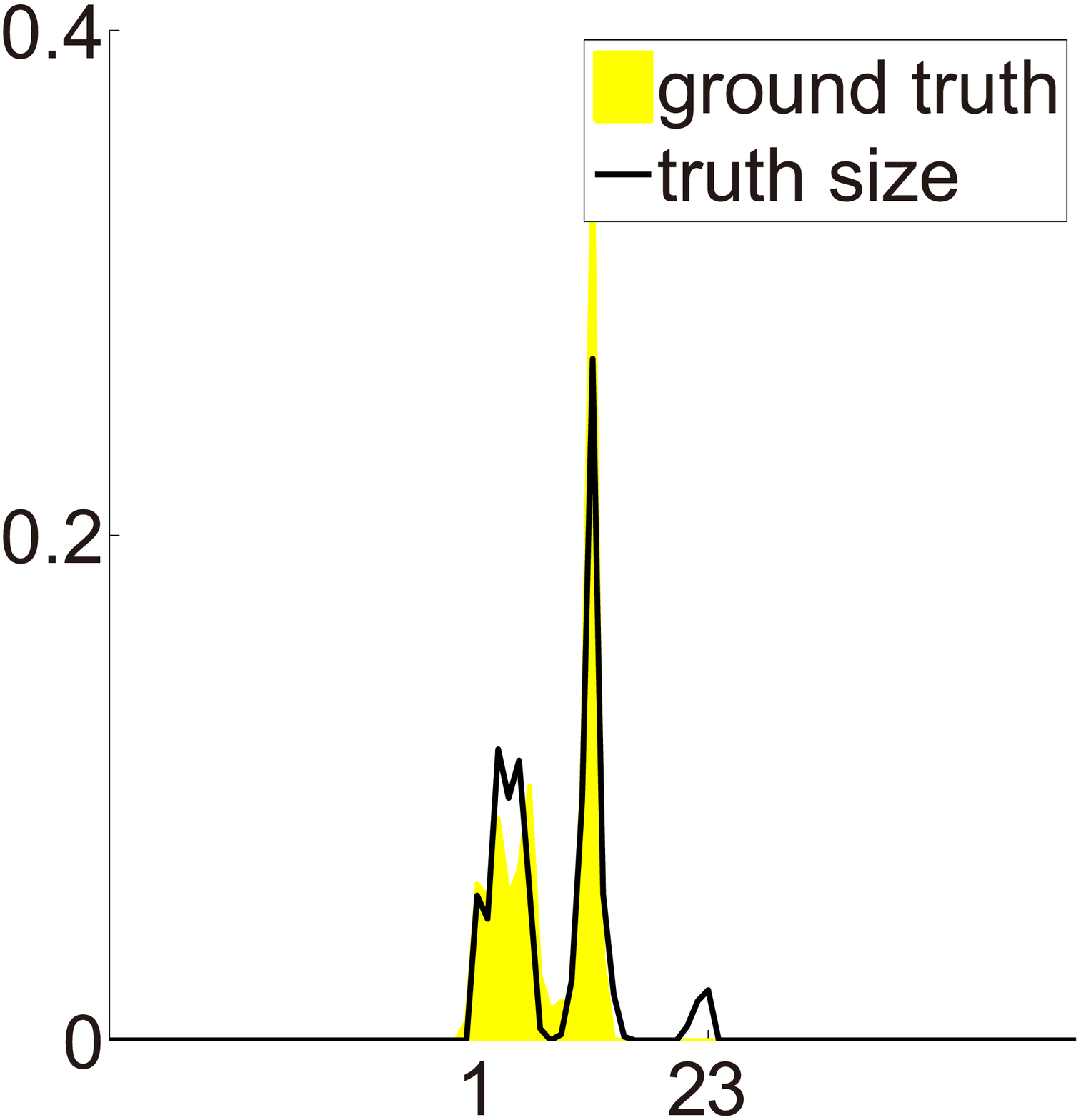} \vspace{0pt} \\
 & (a) & (d)\\

\rotatebox{90}{Estimated kernels} & \includegraphics[width=.4\linewidth]{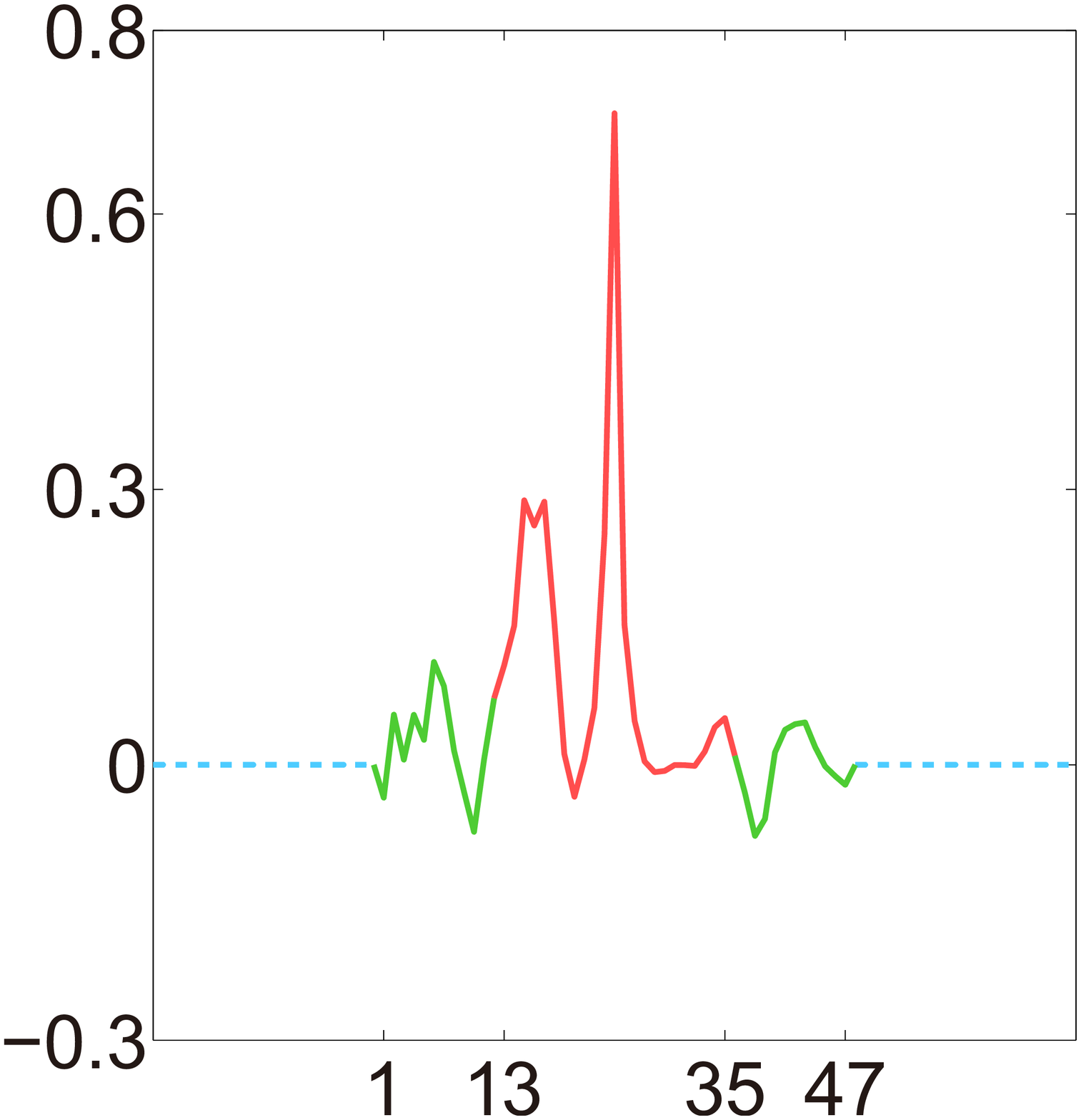} &

\includegraphics[width=.4\linewidth]{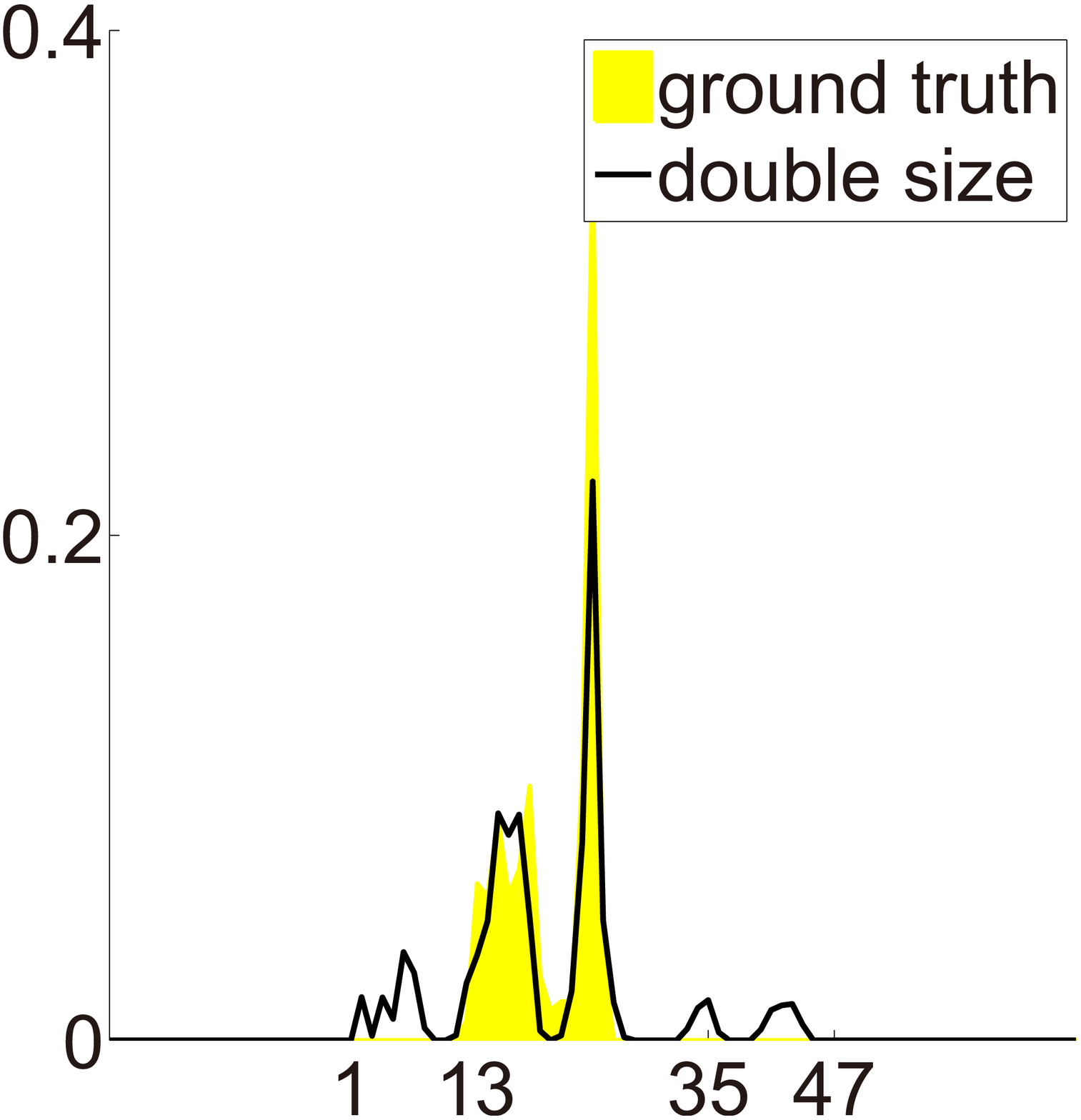} \vspace{0pt} \\
& (b) & (e) \\
&
\includegraphics[width=.4\linewidth]{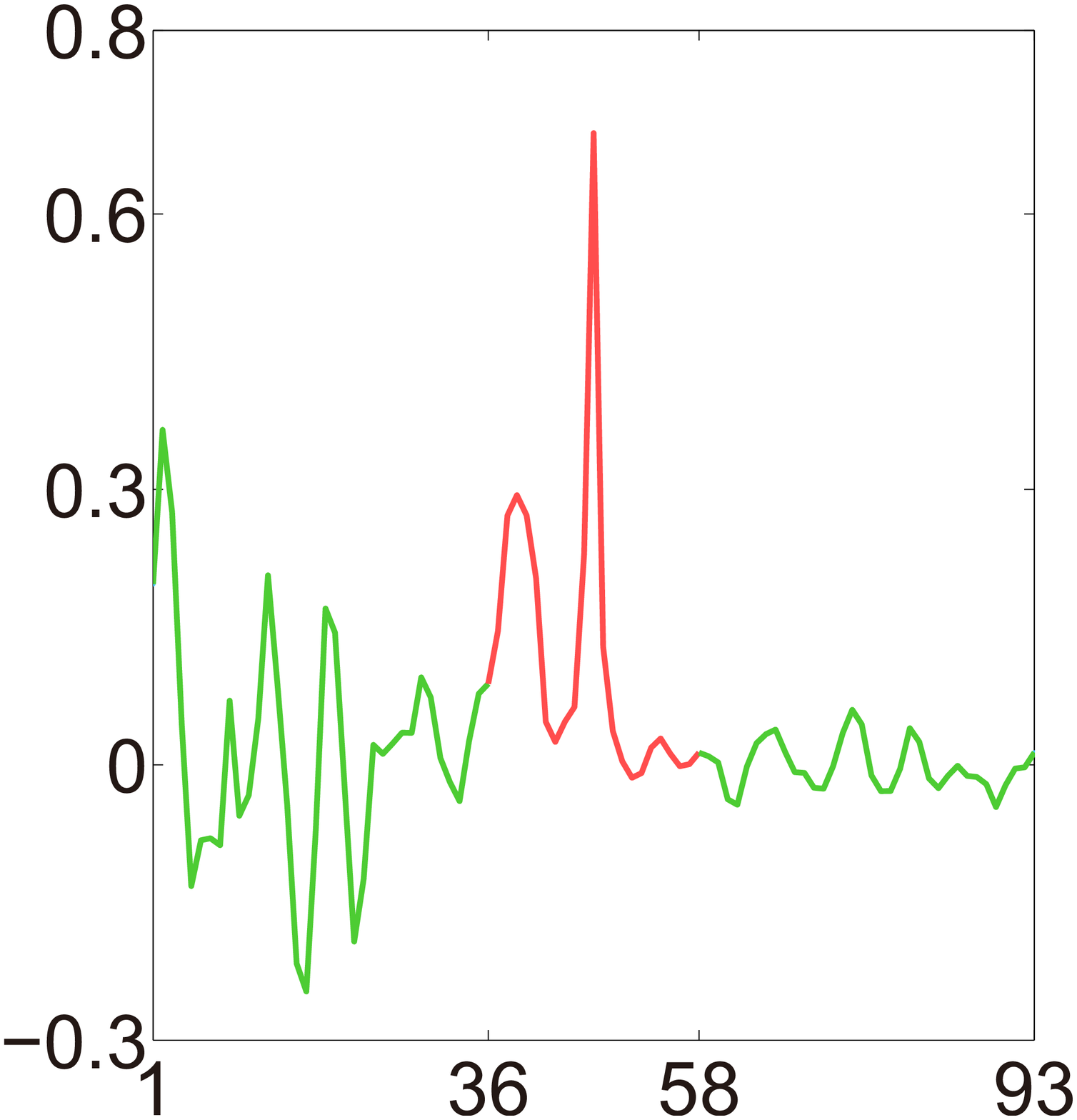} & 
\includegraphics[width=.4\linewidth]{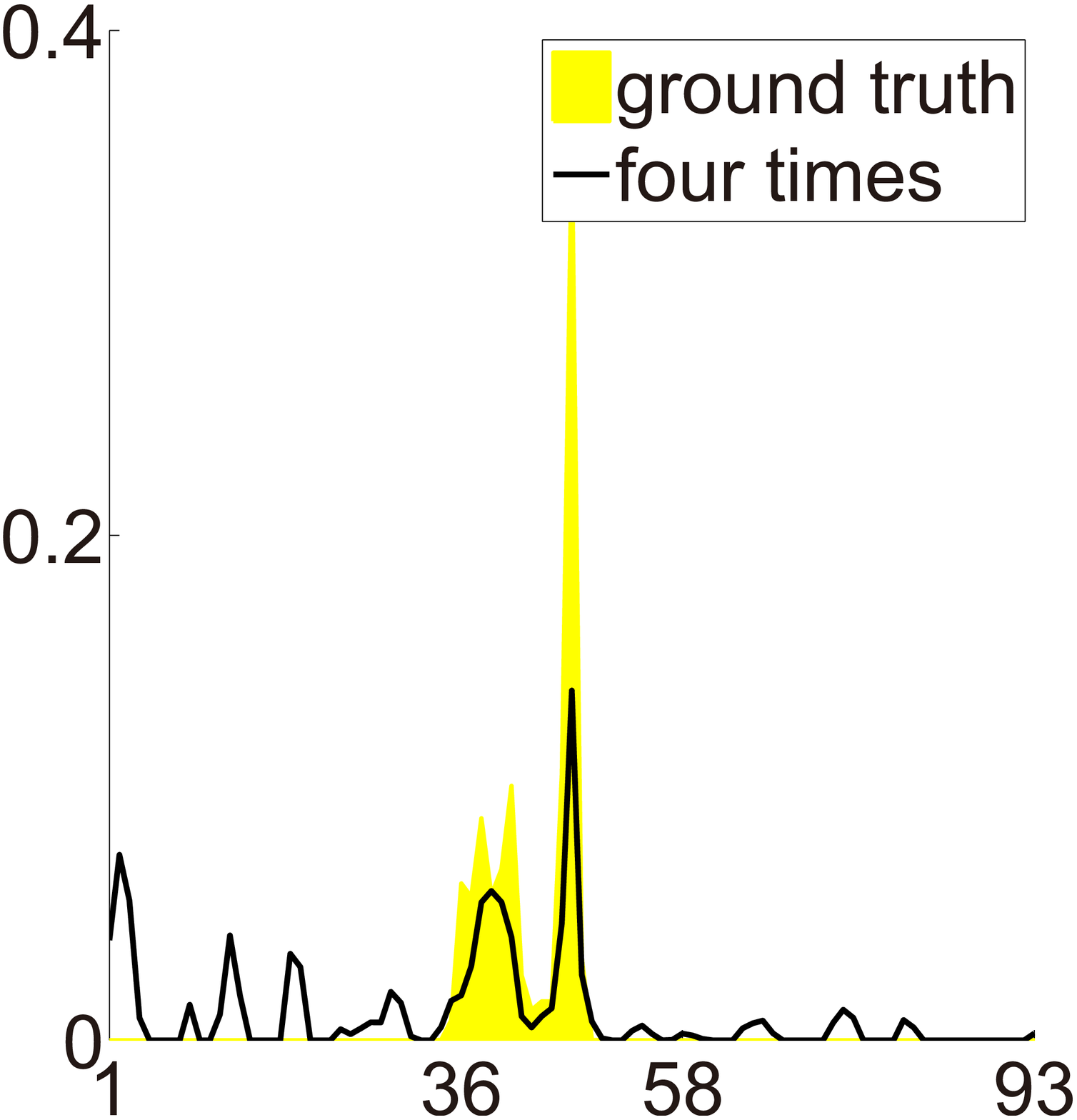}  \\
& (c) & (f) \\
& index &  index   \\
   
\end{tabular}\\
\end{center}
   \caption{Estimated kernels in 1D Blind deconvolution simulations. Left column are optimized kernels of different sizes after 50\small{th} iteration. Right column are corresponding normalized kernels of left after non-negativity and sum-to-one projections.  In this experiment, $\bm x$ is a 255$\times$1 vector extracted from a real image and the truth $\bm k$ is generated by marginalizing a $23\times23$ truth kernel from Levin's dataset~\cite{levin2009understanding}. The signal prior is $\frac{\ell_1}{\ell_2}$. This figure is recommended to view in color.}
\label{fig2}
\end{figure}

\subsection{Probability of larger-kernel effect}

Even if $\bm{\hat x}^{(i)}$ successfully iterates to truth $\bm{x}$, Claim 1 implicates
 the larger-kernel effect remains under the existence of random noise $\bm n$.
 We show
\begin{equation}\label{eq_pro_equals_1}
\Pr \left(\mathrm{rank}\left(\mathbf T_{\bm{x}}\left(L+2\right)\right) > \mathrm{rank} \left(\mathbf T_{\bm{x}} \left(L\right)\right)\right) = 1,
\end{equation}
under which, the \textit{inflating effect} holds for probability one in blind deconvolution.

\begin{figure*}[t]
\begin{center}
   \includegraphics[width=\linewidth]{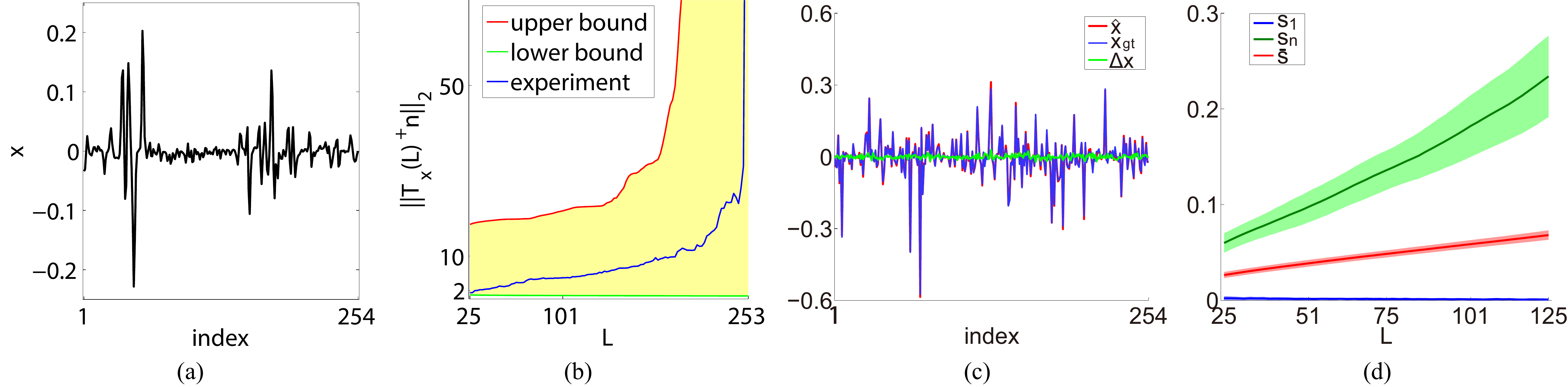}
\end{center}
   \caption{Quantitative simulations show that error increases with kernel size.
   (a) The extracted row from a clear image.
   (b) Singular boundaries of $\mathbf T_{\bm{x}}^\dagger$ and $\|\mathbf T_{\bm{x}}^\dagger \bm{n}\|$ with sampled $\bm n$. (c) Synthetic sparse signals. (d) The smallest, greatest and mean singular values of $(\mathbf T_{\bm x_{gt}}+\mathbf T_{\Delta \bm x})^\dagger \mathbf T_{\bm x_{gt}} - \mathbf I$.}
\label{fig3}
\end{figure*}

Above all, we have
\begin{equation}\label{eq_prob_trans}
\begin{aligned}
  &\Pr \left(\mathrm{rank}\left(\mathbf T_{\bm{x}}\left(L+2\right)\right) > \mathrm{rank} \left(\mathbf T_{\bm{x}} \left(L\right)\right)\right)
  \\ \ge &\Pr\left(\mathrm{rank}\left(\mathbf T_{\bm{x}}\left(M\right)\right)=M\right).
\end{aligned}
\end{equation}
Kaltofen and Lobo~\cite{kaltofen1996rank} proved
that for an M-by-M Toeplitz matrix composed of finite filed of $q$ elements,
\begin{equation}\label{eq_finite_prob}
  \Pr\big(\mathrm{rank}(\mathbf T_{M\times M}) = M\big) = 1-1/q.
\end{equation}
Herein, clear images are statistically sparse on derivative fields~\cite{olshausen1996emergence, weiss2007makes}, and elements of $\bm x$ are modeled to be continuous in hyper-Laplacian distributions~\cite{krishnan2009fast}:
\begin{equation}\label{eq_pdf}
p({x}) = \begin{cases}
\beta \exp\left(-\gamma |x|^{\alpha}\right) & ,x \in [-1, 1] \\
0 &, otherwise.
\end{cases}
\end{equation}
Then we get the following claim:
\begin{thm}
 $$ \Pr\big(\mathrm{rank}(\mathbf T_{\bm{x}}(M)) = M\big) = 1.$$
\end{thm}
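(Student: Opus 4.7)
The plan is to view $\det(\mathbf T_{\bm x}(M))$ (or, if $\mathbf T_{\bm x}(M)$ is not literally square, the determinant of some fixed $M\times M$ submatrix of it) as a polynomial in the real entries $x_1,\ldots,x_M$, to show that this polynomial is not the zero polynomial, and then to use the fact that each coordinate $x_i$ is drawn from the absolutely continuous density $p$ in \eqref{eq_pdf}. Once nontriviality of the polynomial is established, its zero locus in $\mathbb R^M$ is a proper real algebraic variety of Lebesgue measure zero, and the product density $\prod_{i=1}^M p(x_i)\in L^1(\mathbb R^M)$ therefore assigns it probability zero; taking complements yields the claim.

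To establish nontriviality, I would exhibit one witness $\bm x$ for which $\mathbf T_{\bm x}(M)$ has full column rank. A natural choice is $\bm x=(1,0,\ldots,0)$: inspecting the pattern in \eqref{eq_Tx}, one sees that in this case the nonzero entries of $\mathbf T_{\bm x}(M)$ form a single shifted-diagonal of $1$'s, so its $M$ columns are pairwise orthogonal unit vectors and the matrix has rank $M$. A less explicit but equally valid alternative is to argue through \eqref{eq_finite_prob}: if the determinant polynomial (which has integer coefficients in the entries of $\bm x$) were identically zero, then its reduction modulo any prime $q$ would also vanish identically, contradicting the strictly positive probability $1-1/q$ obtained by Kaltofen and Lobo.

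Given a nonzero polynomial $P(x_1,\ldots,x_M)$, I invoke the standard fact that the zero set $\{\bm x\in\mathbb R^M:P(\bm x)=0\}$ has Lebesgue measure zero. Because the joint law of $\bm x$ is absolutely continuous with respect to Lebesgue measure on $[-1,1]^M$, any Lebesgue-null set is also null under this law, so $\Pr(\det(\mathbf T_{\bm x}(M))=0)=0$ and therefore $\Pr(\mathrm{rank}(\mathbf T_{\bm x}(M))=M)=1$, as required. Combining this with \eqref{eq_prob_trans} then yields \eqref{eq_pro_equals_1}.

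The step that I expect to require the most care is the nontriviality of the determinant polynomial, since the convolution matrix in \eqref{eq_Tx} is not literally a square scalar Toeplitz matrix in the sense of \cite{kaltofen1996rank}, so simply quoting \eqref{eq_finite_prob} is slightly awkward; for that reason I would prefer to rely on the explicit shift-matrix witness above, which avoids any question of how the finite-field result transfers to the real convolution-matrix setting. After that, the Lebesgue-measure/absolute-continuity step is entirely routine.
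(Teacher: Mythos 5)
Your overall strategy---show that $\det\mathbf T_{\bm x}(M)$ is a nonzero polynomial in $x_1,\dots,x_M$, then use that the zero set of a nonzero polynomial is Lebesgue-null and that the law of $\bm x$ is absolutely continuous---is sound, and it is essentially a cleaner packaging of the paper's supplementary argument (the paper fixes all variables except $x_{m+1}$, observes via Laplace expansion that the determinant is a monic degree-$M$ polynomial in $x_{m+1}$, proves a no-atoms lemma for the CDF of a polynomial of a continuous variable, and integrates; that is a Fubini-style proof of the same measure-zero fact). However, the step you yourself flagged as the critical one---the witness establishing that the determinant polynomial is not identically zero---fails as stated. The matrix in \eqref{eq_Tx} is a truncated (``same''-size, zero-padded) convolution matrix, not a circulant: with $M=2m+1$, its columns $1,\dots,m$ contain only the entries $x_2,\dots,x_M$ (the sample $x_1$ is shifted out of the frame at the top), so for your witness $\bm x=(1,0,\dots,0)$ those $m$ columns are identically zero and $\mathrm{rank}\,\mathbf T_{\bm x}(M)\le m+1<M$. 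The claim that the nonzero entries form a single shifted diagonal of $1$'s is false for this boundary-truncated matrix, so nontriviality is not established by that choice.

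The repair is immediate: take the centered delta, $x_{m+1}=1$ and all other entries $0$. Then every shifted copy of $\bm x$ stays inside the frame, $\mathbf T_{\bm x}(M)$ is exactly the identity, and the determinant polynomial evaluates to $1$; equivalently, as the paper notes, the monomial of top degree in the Laplace expansion is $x_{m+1}^M$ with coefficient $1$. Your fallback route through \eqref{eq_finite_prob} should be dropped rather than patched: Kaltofen and Lobo's $1-1/q$ concerns general $M\times M$ Toeplitz matrices with $2M-1$ free entries over $\mathbb F_q$, whereas $\mathbf T_{\bm x}(M)$ is a banded Toeplitz matrix whose off-band diagonals are forced to zero, so an identically vanishing determinant of this structured subfamily would not contradict their theorem. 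With the corrected witness, the remaining step (null zero set plus absolute continuity of the product density $\prod_i p(x_i)$) is routine and correct, and arguably more economical than the paper's lemma-plus-conditioning argument.
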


\begin{proof}
See supplementary file.
\end{proof}
To now, we have shown that for $\mathbf{x}$ in sparse distribution, the \textit{inflating effect} happens almost surely.

\subsection{Quantification of error increment}
Assume $\mathbf{\hat x}$ iterates to ground truth $\mathbf{x}_{gt}$ during iterations. Then, for estimated kernel $\mathbf{\hat k}$, we have
\begin{equation}\label{eq_first_noise_k}
\begin{aligned}
  \hat{\bm{k}} =  \arg\min_{\bm k}{\|\mathbf T_{\bm x_{gt}}\bm k - \bm y\|^2}
  =  \bm k_{gt} + \mathbf T_{\bm x_{gt}}^\dagger \bm n
\end{aligned}
\end{equation}
where $\dagger$ represents Moore-Penrose pseudo-inverse. Then,
\begin{equation}\label{eq_first_noise_ssd}
  SSD=\|\bm {\hat k} - \bm k_{gt} \|^2 = \|\mathbf T_{x_{gt}}^\dagger \bm n\|^2.
\end{equation}
Assume $\|\bm n\| = 1$, then
\begin{equation}\label{eq_first_bound}
  s_1(\mathbf T_{\bm x_{gt}}^\dagger) \le \|\mathbf T_{\bm x_{gt}}^\dagger \bm n\| \le s_n(\mathbf T_{\bm x_{gt}}^\dagger),
\end{equation}
where $s_1$ and $s_n$ represents the smallest and the greatest singular values, respectively.

The \textit{inflating effect} implicates that a larger kernel size amplifies the error in $\mathbf {\hat k}$ due to noise $\mathbf n$. To quantify this increment, we extracted a line $\bm x$ from a clear image in Levin's set~\cite{levin2009understanding} as shown in Figure ~\ref{fig3}(a), and plotted $s_1(\mathbf T_{\bm{x}}^\dagger)$ and $s_n(\mathbf T_{\bm{x}}^\dagger)$ with increasing kernel size $L$. We also generated normalized random Gaussian vectors $\bm n$ and compared $\|\mathbf T_{\bm x} \bm n\|$ to simulated boundaries of singular values (see Figure ~\ref{fig3}(b)). The error in $\mathbf{\hat k}$ increases hyper-linearly with kernel size.

In practice, nuances are expected between $\mathbf{\hat{x}}$ and $\mathbf{x}_{gt}$. Cho and Lee~\cite{cho2016convergence} indicated that $\mathbf{\hat x}$ should be regarded as a sparse approximation to $\mathbf{x}_{gt}$, not the ground truth. Hence,
\begin{equation}\label{eq_not_same}
\mathbf{\hat x} = \mathbf{x}_{gt} + \Delta \mathbf x,
\end{equation}
which yields implicit noise~\cite{shan2008high}.
Assume $\mathbf n = \mathbf 0$, then,
\begin{equation}\label{eq_second_noise_conv}
  \begin{aligned}
    \mathbf{\hat x} \otimes \mathbf{k}_{gt}  = \mathbf x_{gt} \otimes \mathbf k_{gt} + \Delta \mathbf x \otimes \mathbf k_{gt}
  \end{aligned}
\end{equation}
and
\begin{equation}\label{eq_second_k}
  \begin{aligned}
    \bm{\hat k} = & \arg\min_{\bm k}\|\mathbf T_{\bm {\hat x}} \bm k - \bm y\|^2 \\
    = & (\mathbf T_{\bm x_{gt}} + \mathbf T_{\Delta \bm x})^\dagger \mathbf T_{\bm x_{gt}}\bm{k}_{gt}.
  \end{aligned}
\end{equation}
Then,
\begin{equation}\label{eq_second_ssd}
  SSD=\|\left((\mathbf T_{\bm x_{gt}}+\mathbf T_{\Delta \bm x})^\dagger \mathbf T_{\bm x_{gt}}-\mathbf I\right)\bm{k}_{gt}\|^2.
\end{equation}
To quantify how singular values of $(\mathbf T_{\bm x_{gt}}+\mathbf T_{\Delta \bm x})^\dagger \mathbf T_{\bm x_{gt}} - \mathbf I$ changes with kernel size, we simulated 100 times, in each of which we generated a stochastic sparse signal $\bm x_{gt}$ with length 254 under PDF in (\ref{eq_pdf}) with $M = 254$, $\gamma = 10$ and $\alpha = 0.5$, and generated random Gaussian vector $\Delta \bm{x}$ where $ \|\Delta \bm x\| = \|\bm x_{gt}\| / 100$. Figure ~\ref{fig3}(c) shows one example of generated $\bm x_{gt}$ and $\Delta \bm{x}$. Figure ~\ref{fig3}(d) shows means and standard deviations of $s_1$, $s_n$ and $\bar s$, which is the average of singular values, of simulated $(\mathbf T_{\bm x}+\mathbf T_{\Delta \bm x})^\dagger \mathbf T_{\bm x} - \mathbf I$ on $L$. The error of $\mathbf{\hat k}$ is expected to grow with kernel size even $\mathbf n =0$.

\begin{figure*}[t]
\begin{center}
   \includegraphics[width=0.9\linewidth]{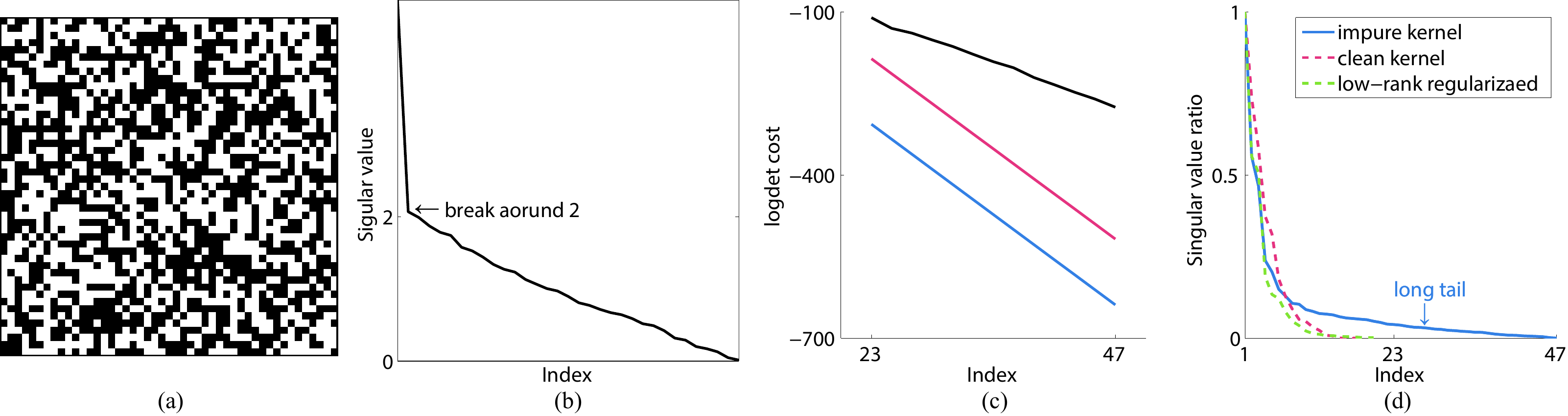}
\end{center}
   \caption{Singular values of clean kernels and noisy matrices. (a) The support domain (black) of a $47\times47$ random positive half Gaussian noise matrix. (b) The distribution of singular values of (a). (c) $\log\det$ costs of random Gaussian noise matrix (black), the truth kernel from~\cite{levin2009understanding} after zero-padding (red), and Gaussian PSF with $\text{size}/6$ standard deviation (blue) on kernel size. (d) Scaled (maximum to 1) singular value distributions of clean, impure and regularized kernels. }
\label{fig6}
\end{figure*}

\section{Low-rank regularization}
Blind deconvolution is an ill-posed problem for lacking sufficient information. Without regularization, MAP degrades to Maximum Likelihood (ML), which yields infinite solutions~\cite{levin2009understanding}. As prior information, kernel regularization should be designed to compensate the shortage of ML and to guide the optimization to expected results. 
Great amount of studies focus on image regularization to describe natural images, \eg,  Total Variation (TV-$\ell_1$)~\cite{levin2007image,wang2008new,ren2015fast}, hyper-Laplacian~\cite{krishnan2009fast}, dictionary sparsity~\cite{zhang2010bregmanized,hu2010single}, patch-based low rank prior~\cite{ren2016image}, non-local similarity~\cite{dong2013nonlocal} and deep discriminative prior~\cite{li2018learning}.

Unfortunately, kernel optimization doesn't attract much attention of the literature. Previous works adopted various kernel regularizations, e.g., $\ell_2$-norm~\cite{xu2010two, gong2016blind, cho2009fast, xu2013unnatural,pan2016blind}, $\ell_1$-norm~\cite{krishnan2011blind, shan2008high, pan2016robust} and $\ell_\alpha$-norm $(0<\alpha<1)$~\cite{zuo2015discriminative}, which, however, generally treated kernel regularization as an accessory and lacked a detailed discussion.

The larger-kernel effect is yielded by noise in ultra-sized kernels. Figure ~\ref{fig1} and Figure ~\ref{fig2} show that without kernel regularization, the main bodies of estimated kernels can emerge clearly, but increasing noises take greater amounts when $k$ is larger. To constrain $\hat k$ to be clean, regularization $h$ is expected to distinguish noise from ideal kernels efficiently.

To suppress the noise in estimated kernels, we take low-rank regularization on $k$ such that k-step \eqref{eq4} becomes
\begin{equation}\label{eq_low_rank_opt}
\mathbf{\hat{k}}^{(i+1)} = \arg\min_{\mathbf k}{\left(\|\mathbf{\hat{x}}^{(i+1)}\otimes \mathbf k- \mathbf y\|^2+\sigma \mathrm{rank}\left(\mathbf k\right)\right)}.
\end{equation}

Because the direct rank optimization is an NP-hard problem, continuous proximal functions are required. Fazel~\etal~\cite{fazel2003log} proposed
\begin{equation}\label{heuristic_proxy}
  \log\det\left(\mathbf X + \delta \mathbf I\right)
\end{equation}
as a heuristic proxy for
 $\mathbf X\in \mathbb{S}^N_+ $ where $\mathbf I$ is the N-by-N identity matrix and $\delta$
 is a small positive number. 

To allow this approximation to play a role in general matrices, the low-rank object is substituted to $(\mathbf X \mathbf X^T)^{1/2}$~\cite{dong2014compressive}. The regularization function then becomes
\begin{equation}\label{heuristic_proxy2}
  h(\mathbf X)=\log{\det ((\mathbf X \mathbf X^T)^{\frac{1}{2}}+\delta \mathbf I)} = \sum_j \log(s_i + \delta),
\end{equation}
where $s_i$ is the $i$-th singular value of $X$.

Taking low-rank regularization on kernels is motivated by a generic phenomenon of noise matrices~\cite{andrews1977digital}.
Figure ~\ref{fig6}(a-b) shows a non-negative Gaussian noise matrix and its singular values in decreasing order.  For a noise matrix, where light and darkness alternate irregularly, the distribution of singular values decays sharply at lower indices; then, it breaks and drag a relatively long and flat tail to the last.
In contrast, ideal kernels respond much lower to $\log\det$ regularization (see Figure ~\ref{fig6}(c)).
Based on this fact, noise matrices are distinguished by high $\log\det$ cost from real kernels. Figure ~\ref{fig6}(d) shows that singular values of a low-rank regularized kernel are distributed similarly as the ground truth, compared with the impure one.

One intelligible explanation on the low-rank property of ideal kernels is the continuity of blur motions. Rank of a matrix equals the number of independent rows or columns; it reversely reflects how similar these rows or columns are. Speed of a camera motion is deemed to be continuous~\cite{fang2014separable}. Hence, the local trajectory of a blur kernel emerges similar to neighbor pixels, which is measured in a low value by the continuous proxy of rank.

\begin{figure}[t]
\begin{center}
   \includegraphics[width=0.8\linewidth]{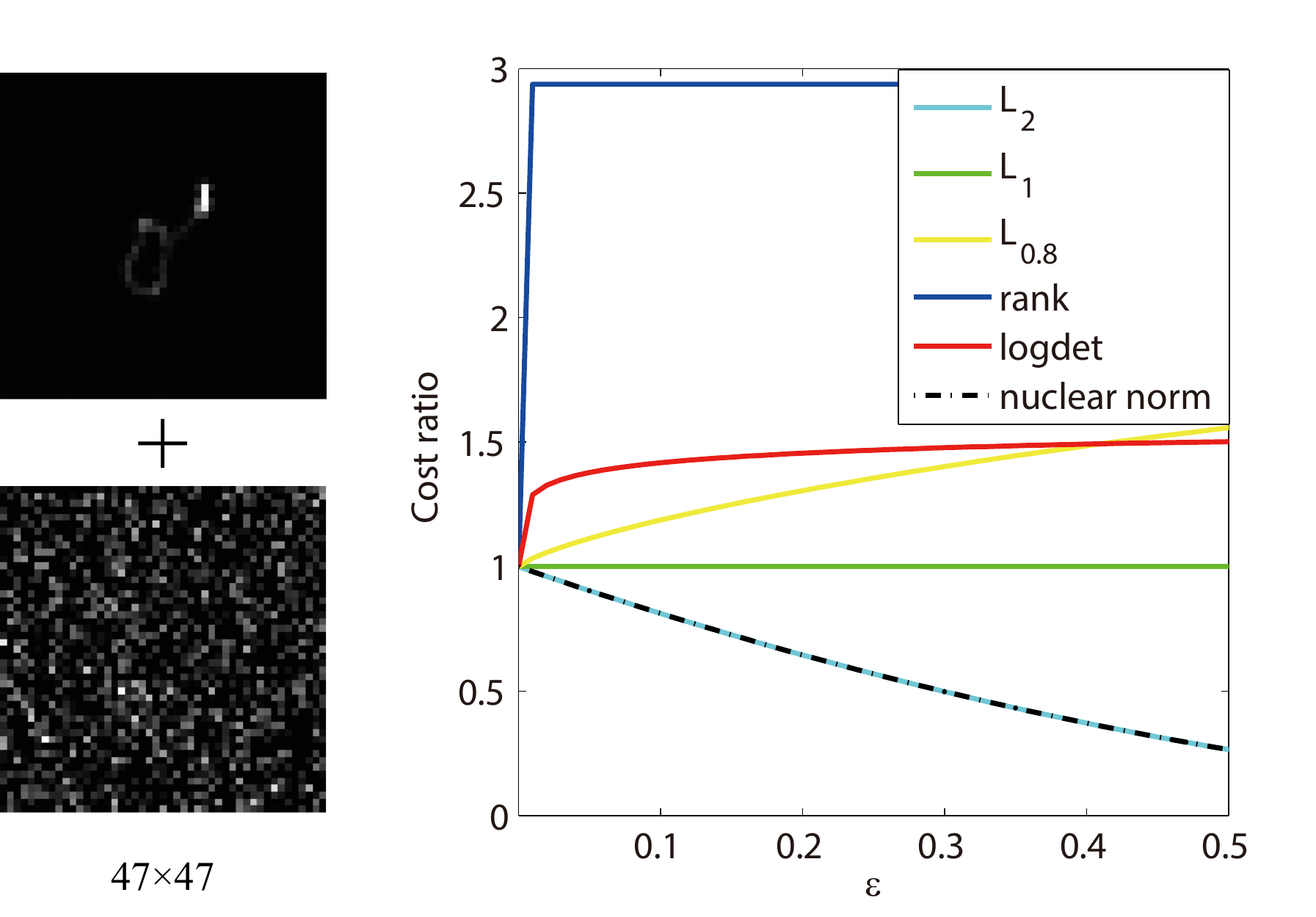}
\end{center}
   \caption{Comparison on respond to noise. The cost ratio is calculated as $1+\frac{\text{cost}(\epsilon)-\text{cost}(0)}{|\text{cost}(0)|}$. This figure is recommended to view in color.}
\label{fig7}
\end{figure}
Compared to previous $\ell_{\alpha}$ norms, low-rank regularization responds more efficiently to noise. To illustrate this point,
we generated a noisy kernel by adding a small percentage ($\epsilon$) of non-negative Gaussian
noise and $1-\epsilon$ of the real kernel. Figure ~\ref{fig7} shows that the low-rank cost rapidly adjust favorably to the noise but $\ell_{\alpha}$ norms fail. That is because $\ell_\alpha$ only takes statistical information. An extreme example consists of disrupting a truth kernel and randomly reorganizing its elements, with $\ell_\alpha$ cost unchanged. In contrast, rank (singular values)  corresponds to structural information.

\section{Optimization}

Function $\log\det$ is non-convex (and it is actually concave on $\mathbb{S}_+$). To solve the low-rank regularized least
squares ~\eqref{eq4}, 
we introduce an auxiliary variable $\Psi = k$ and reformulate the optimization into
\begin{equation}\label{2_optimiz}
\begin{aligned}
  \min_{\mathbf k, \mathbf \Psi} & \left(\mathbf{\|\hat x}^{(i+1)}\otimes \mathbf \Psi-\mathbf y\|^2  +
\sigma\log\det\left(\left(\mathbf k \mathbf k\right)^{\frac{1}{2}}+\delta\mathbf{I}\right)\right) \\
\text{s.t. } & \mathbf \Psi  = \mathbf k
\end{aligned}
\end{equation}

Using the Lagrange method, (\ref{2_optimiz}) is solved by two alternate sub-optimizations
\begin{equation}\label{eq:two_step}
\left\{
             \begin{aligned}
             \mathbf{\hat \Psi}^{(j+1)} &= \arg\min_{\mathbf \Psi} {\|\mathbf{\hat x}^{(i+1)}\otimes \mathbf \Psi-\mathbf y\|^2 + \mu \|\mathbf \Psi -
             \mathbf {\hat k}^{(j)}\|^2  }\\
             \mathbf{\hat k}^{(j+1)} &= \arg\min_{\mathbf k}{
             \frac{1}{2\tau}\|\mathbf k-\mathbf{\hat\Psi}^{(j+1)}\|^2+\sigma h(\mathbf k)}
             \end{aligned}
\right.
\end{equation}
where $j$ is the iteration number while $\mu$ and $\tau$ are trade-off parameters.

The $\Psi$-substep is convex and accomplished using the Conjugate Gradient (CG) method. For $k$-substep,
low rank is adopted with limit; otherwise, the regularization may change the
main body of kernel---an extreme result is $\mathbf{\hat k} = 0$. Thus, our strategy is to lower
the rank  at $\mathbf{\hat \Psi}$ locally. Using the first-order Taylor expansion of $h$ at fixed matrix $\mathbf Z$:
\begin{equation}\label{taylor}
  \begin{aligned}
  h_{\mathbf Z}(\mathbf X)=h\left(\mathbf Z\right) + \sum_i\frac{s_i-\hat{s}_i}{\hat{s}_i+\delta},
  \end{aligned}
\end{equation}
where $\hat{s}_i$ is the $i$-th eigenvalue of $\mathbf Z$, the k-substep in \eqref{eq:two_step} is transformed into an iterative optimization
\begin{equation}\label{final_eq}
\mathbf k^{(t+1)} = \arg\min_{\mathbf k}
             \left(\frac{1}{2\tau}\|\mathbf k-\mathbf{\hat\Psi}^{(j+1)}\|^2+\sigma h_{\mathbf k^{(t)}}\left(\mathbf k\right) \right)
\end{equation}
where $t$ is the inner iteration number. For convenience, we set $\sigma$ as a flag (if $\sigma=0$, the k-substep will be skipped) and only tuned $\tau$ as the trade-off parameter.

Define the proximal mapping of function $\phi$ as follows:
\begin{equation}\label{prox1}
  \prox_\phi(v) = \arg\min_u\left(\frac{1}{2}\|u-v\|^2 + \phi(u)\right).
\end{equation}
Dong~\etal~\cite{dong2014compressive} proved that one solution to the proximal mapping of $\tau h_{\mathbf Z}$ is
\begin{equation}\label{prox2}
  \prox_{\tau
h_{\mathbf Z}}\left(\mathbf X\right)=\mathbf U\left(\mathbf \Sigma-\tau\mathrm{diag}\left(\bm{w}\right)\right)_+\mathbf V^T
\end{equation}
where
$\mathbf U\mathbf \Sigma \mathbf V^T$ is SVD of $\mathbf X$, $w_i = 1/\left(\hat{s}_i+\delta \right) $
and
${(\cdot)}_+=\max{\{ \cdot,0 \}}$.
Local low-rank optimization is implemented as iterations via the given parameter $\tau$ (see Algorithm~\ref{algk}).
In our implementation, $\mu$ is designed to exponentially grow with $j$ to allow more freedom of $\mathbf{\hat \Psi}$ for early iterations.

\emph{Overall Implementation.} We took deconvolution sechme in~\cite{krishnan2011blind} where $g = \ell_1/\ell_2$ (but with small modification) and applied non-blind deconvolution method proposed in~\cite{krishnan2009fast}.

\begin{algorithm}[t]
\caption{Updating k with low-rank regularization}\label{algk}
\begin{algorithmic}[1]
\Require{$\mathbf x$, $\mathbf y$, $\mu$, $\tau$, $OuterIterMax$, $CGIterMax$, $innerIterMax$}
\Ensure{$\mathbf{\hat k}$}
\For{$j\gets0$ to $OuterIterMax - 1$}
\If{$j=0$}
\State{$\mathbf{\hat \Psi}^{(j+1)}\gets \min_{\mathbf\Psi} \|\mathbf x\otimes\mathbf \Psi-\mathbf y\|^2 $ using CG
with maximum $CGIterMax$ iterations}
\Else
\State{$\mu^{(j)} \gets \mu \left(e^j/e^{OuterIterMax}\right)$}
\State{\small{$\mathbf{\hat \Psi}^{(j+1)}\gets \min_{\mathbf \Psi}{ \|\mathbf x \otimes \mathbf \Psi - \mathbf y\|^2 + \mu^{(j)} \|\mathbf \Psi - \mathbf k^{(j)}\|^2 }$} using CG with $CGIterMax$ iterations}
\EndIf
\State{Initializing $\mathbf k^{(0)}$ with all singular values equal to 1}
\For{$t\gets 0$ to $innerIterMax - 1$}
\State{$\mathbf k^{(t+1)}\gets \prox_{\tau h_{\mathbf k^{(t)}}}\left(\mathbf{\hat \Psi}^{(j)}\right)$}
\EndFor{}
\State{$\mathbf{\hat k}^{(j+1)}\gets \max\{\mathbf k^{(innerIterMax)}, 0\}$}
\State{$\mathbf{\hat k}^{(j+1)}\gets  \mathbf{\hat k}^{(j+1)}/\sum \mathbf{\hat k}^{(j+1)}$}
\EndFor
\State{$\mathbf{\hat k}\gets \mathbf{\hat k}^{(outerIterMax)}$}
\end{algorithmic}
\end{algorithm}

\begin{algorithm}[t]
\caption{Blind Deconvolution (single-scaling version)}\label{algall}
\begin{algorithmic}[1]
\Require{blurry image $\mathbf y$, kernel size $L$, $\lambda$, $\eta$, $\tau$, $IterMax$}
\Ensure{clear image $x$ and degradation kernel $k$}
\State{$\mathbf y\gets\left[\mathbf \nabla_h{\mathbf y}, \mathbf \nabla_v{\mathbf y}\right]$}
\State{Initialize $\mathbf x \gets \mathbf y$}
\State{Initialize $\mathbf k$ with an $L\times L$ zero matrix adding [0.5 0.5] in the center}
\For{$t\gets 1$ to $IterMax$}
\State{Update $\mathbf x$ using Algorithm 3 in~\cite{krishnan2011blind}}
\State{Update $\mathbf k$ using Algorithm~\ref{algk}}
\EndFor{}
\State{$x\gets$ Non-blind deconvolution $(\mathbf k, \mathbf y)$}
\end{algorithmic}
\end{algorithm}

\section{Experimental Results}
In this section, we first discuss the effects of low rank-based regularization, then evaluate the proposed method on benchmark datasets, and finally demonstrate its effectiveness on real-world blurry images. 
The source code is available at \url{https://github.com/lisiyaoATbnu/low_rank_kernel}.

\begin{figure}[b]
\centering
  \setlength{\tabcolsep}{3pt}
  \small
  \begin{tabular}{ccc}

  \includegraphics[width=.29\linewidth]{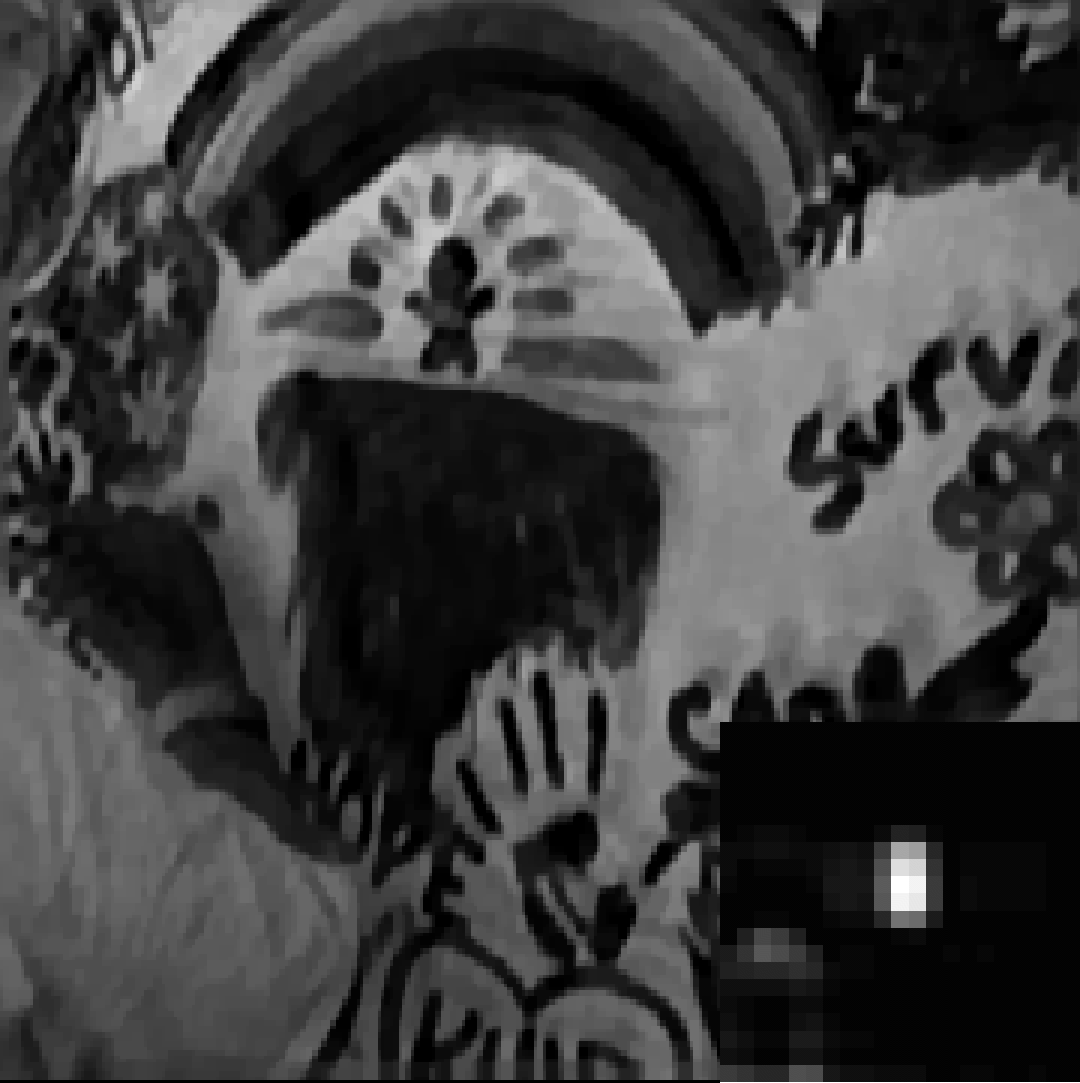} &
\includegraphics[width=.29\linewidth]{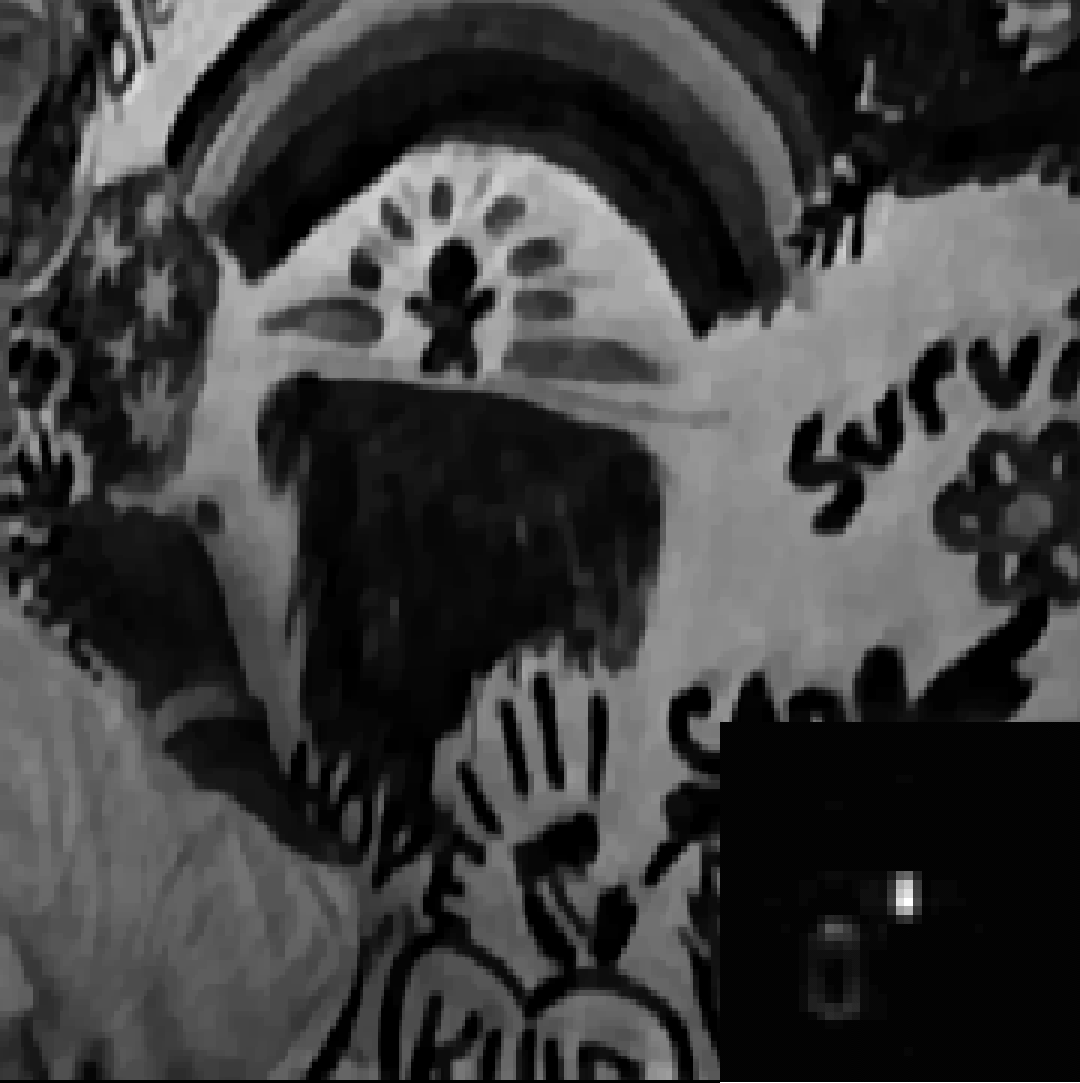} &
\includegraphics[width=.29\linewidth]{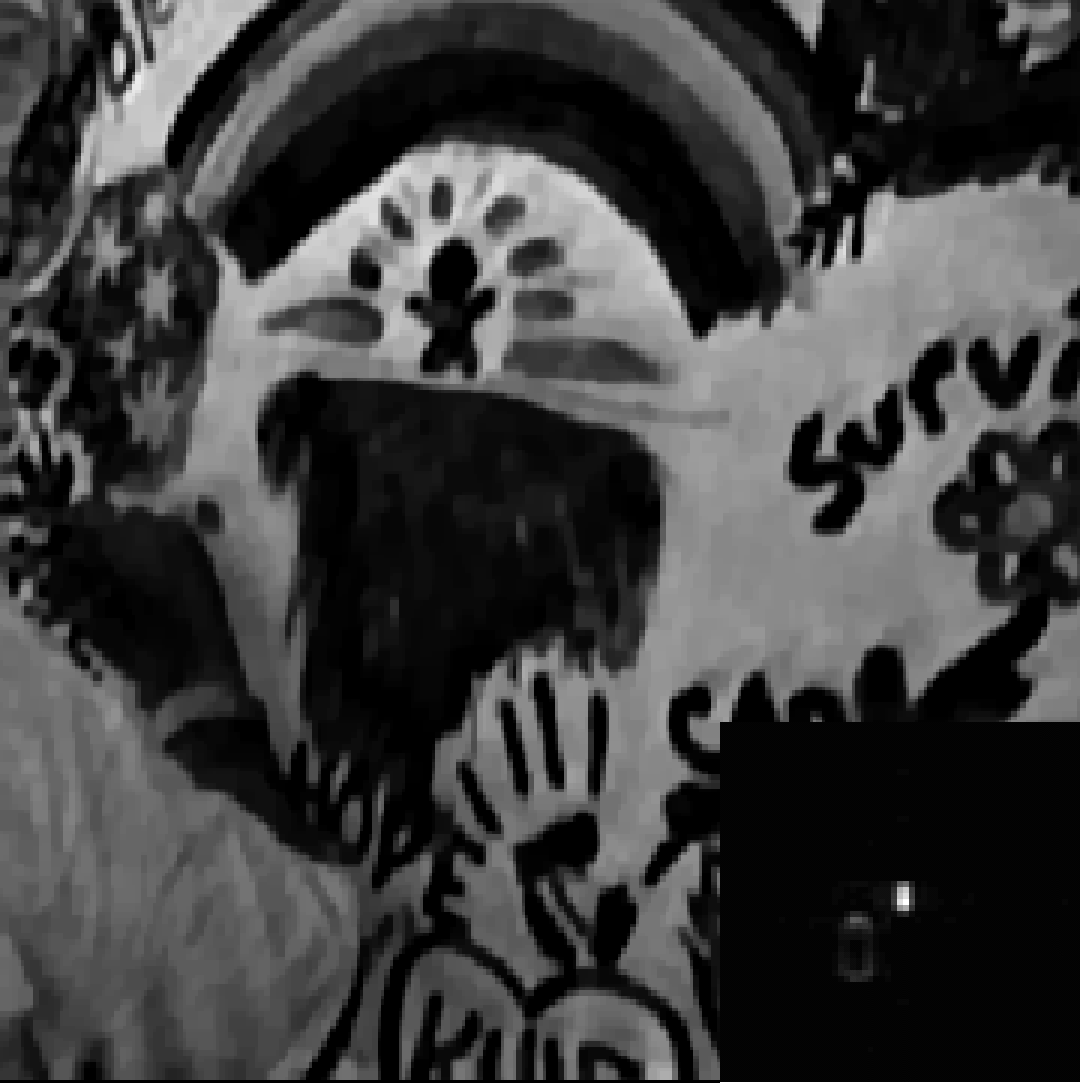} \vspace{-4pt} \\

 size=23, err=1.55 & size=47, err=1.56 & size=69, err=2.14 \vspace{1pt}\\

\end{tabular}\\
  \caption{Deblurring results using low-rank regularization.}\label{fig8}

\end{figure}

\begin{figure*}[t]
\centering
  \setlength{\tabcolsep}{1pt}
  \small
  \begin{tabular}{ccccccccccccccccccccc}

   \multicolumn{3}{c}{\includegraphics[width=0.13\linewidth]{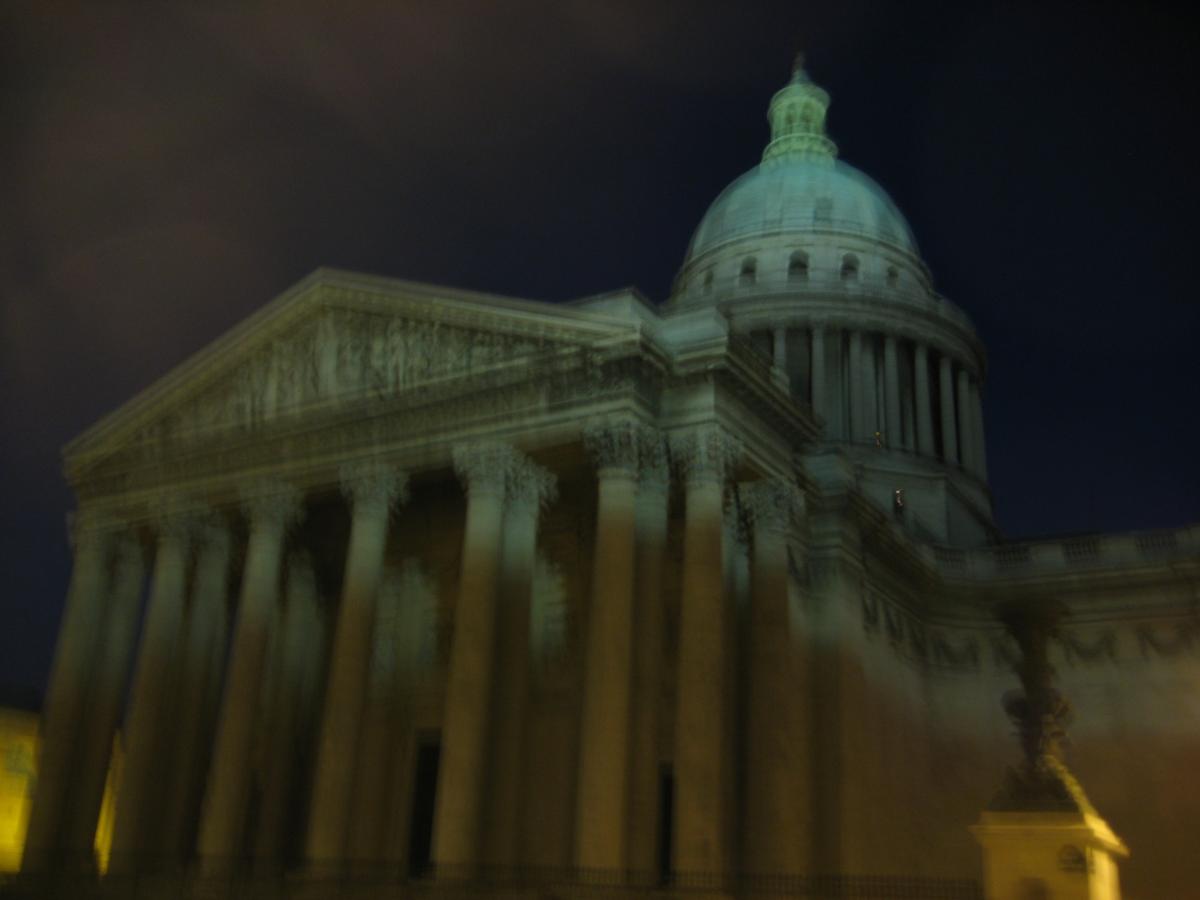}}
  &\multicolumn{3}{c}{\includegraphics[width=0.13\linewidth]{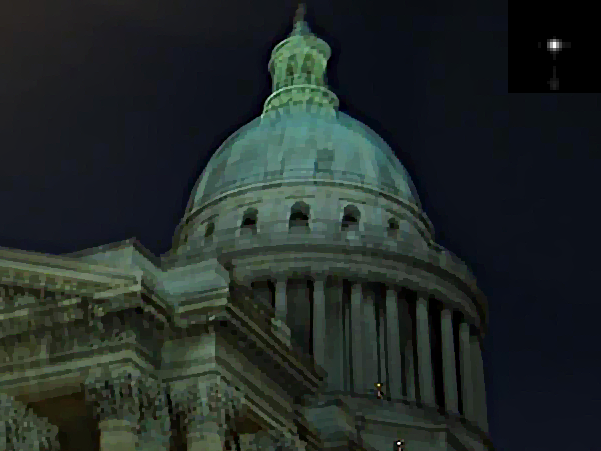}}
  &\multicolumn{3}{c}{\includegraphics[width=0.13\linewidth]{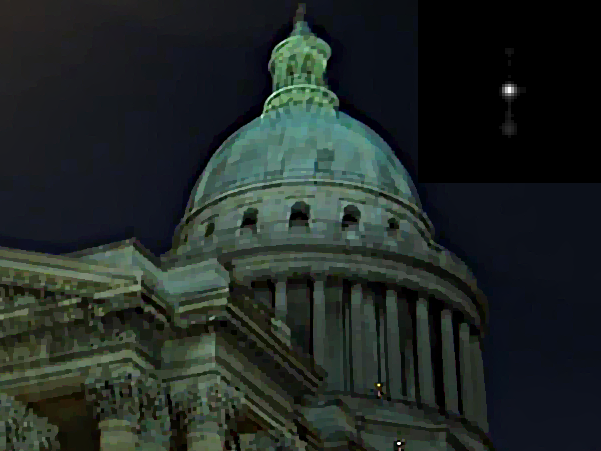}}
  &\multicolumn{3}{c}{\includegraphics[width=0.13\linewidth]{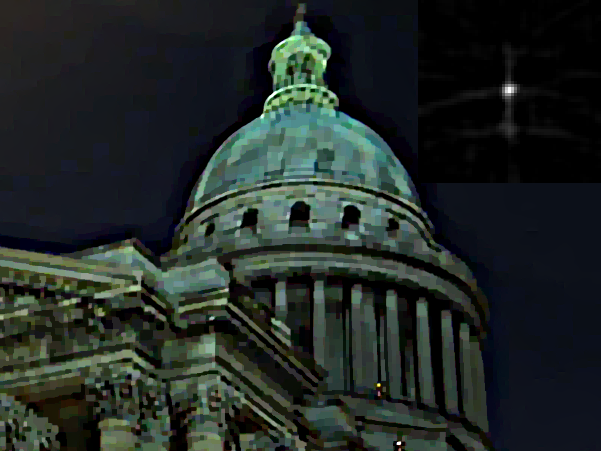}}
  &\multicolumn{3}{c}{\includegraphics[width=0.13\linewidth]{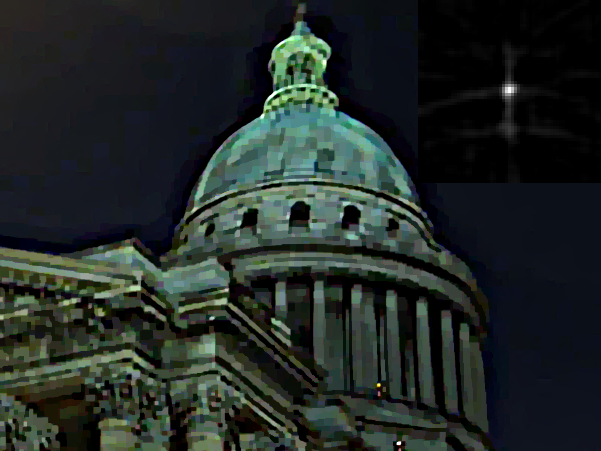}}
  &\multicolumn{3}{c}{\includegraphics[width=0.13\linewidth]{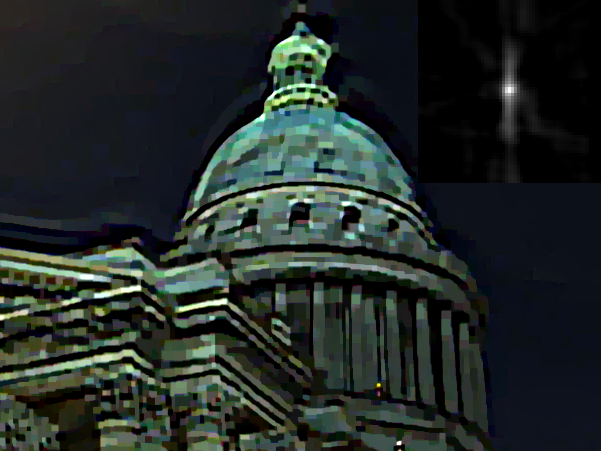}}
  &\multicolumn{3}{c}{\includegraphics[width=0.13\linewidth]{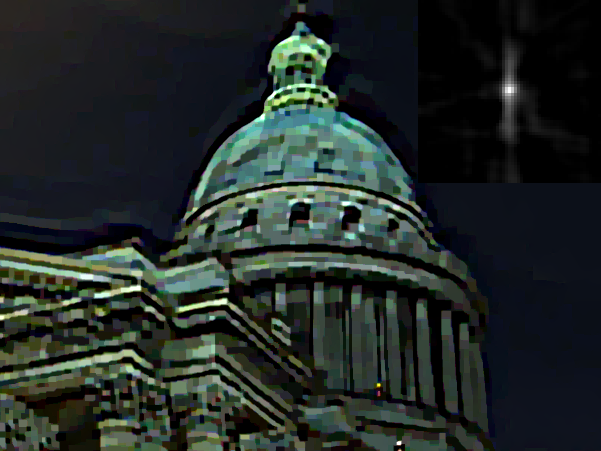}} \vspace{-1pt}\\

   \multicolumn{3}{c}{\includegraphics[width=0.13\linewidth]{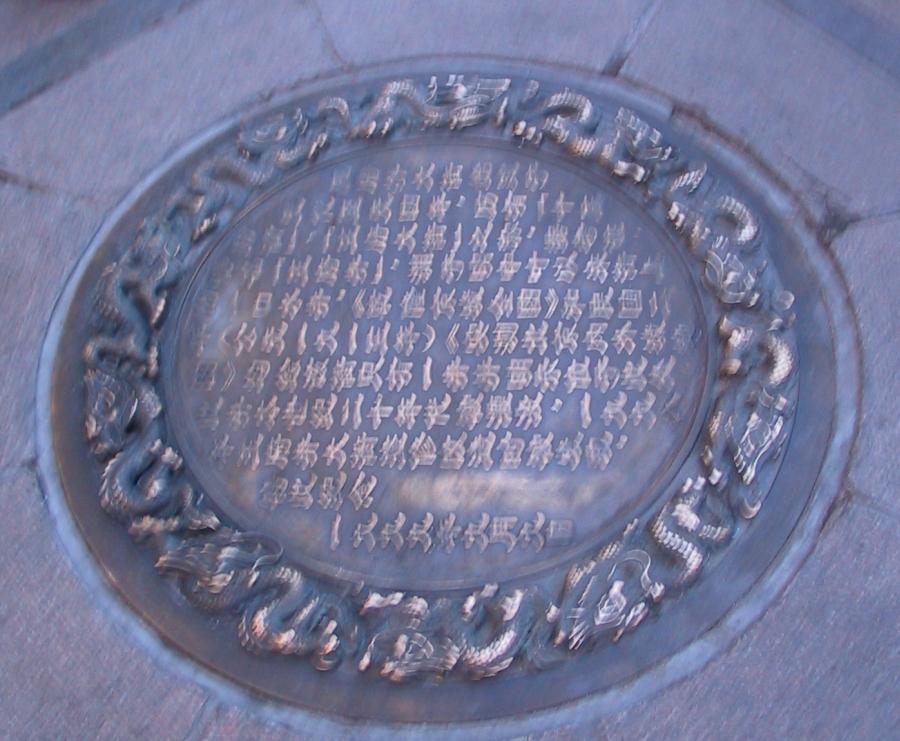}}
  &\multicolumn{3}{c}{\includegraphics[width=0.13\linewidth]{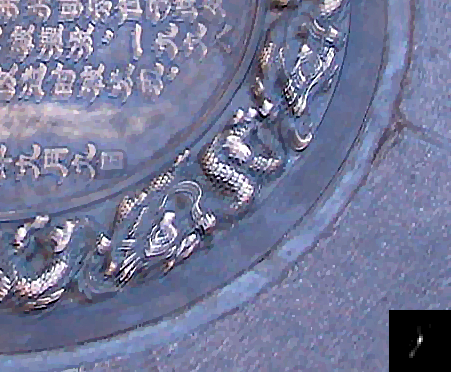}}
  &\multicolumn{3}{c}{\includegraphics[width=0.13\linewidth]{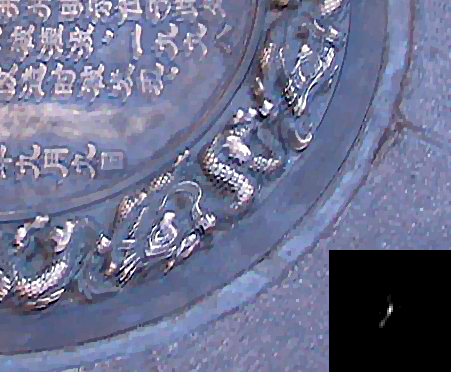}}
  &\multicolumn{3}{c}{\includegraphics[width=0.13\linewidth]{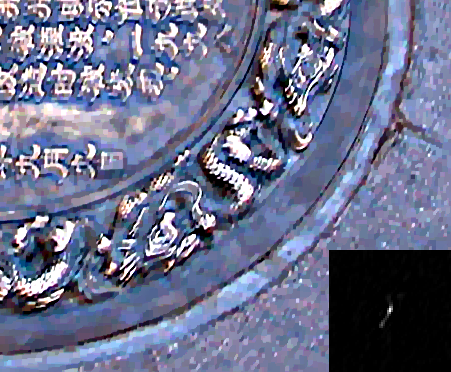}}
  &\multicolumn{3}{c}{\includegraphics[width=0.13\linewidth]{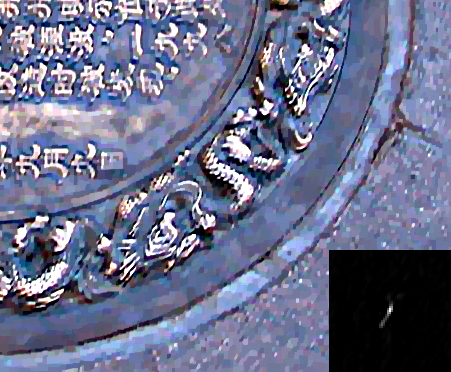}}
  &\multicolumn{3}{c}{\includegraphics[width=0.13\linewidth]{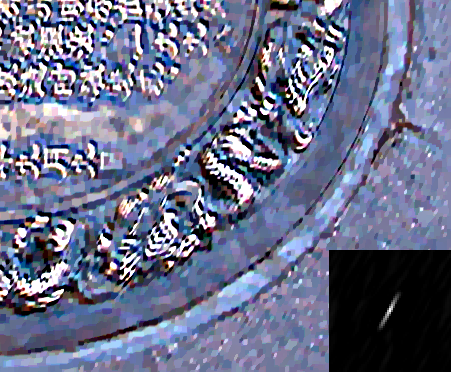}}
  &\multicolumn{3}{c}{\includegraphics[width=0.13\linewidth]{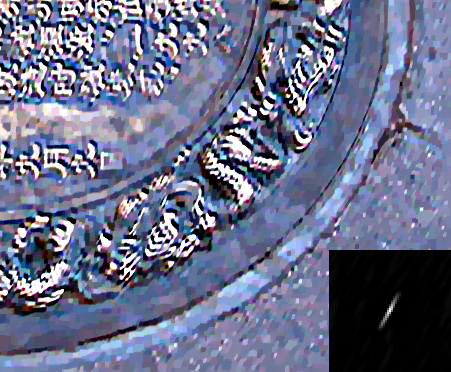}} \vspace{-1pt}\\
    
   \multicolumn{3}{c}{\includegraphics[width=0.13\linewidth]{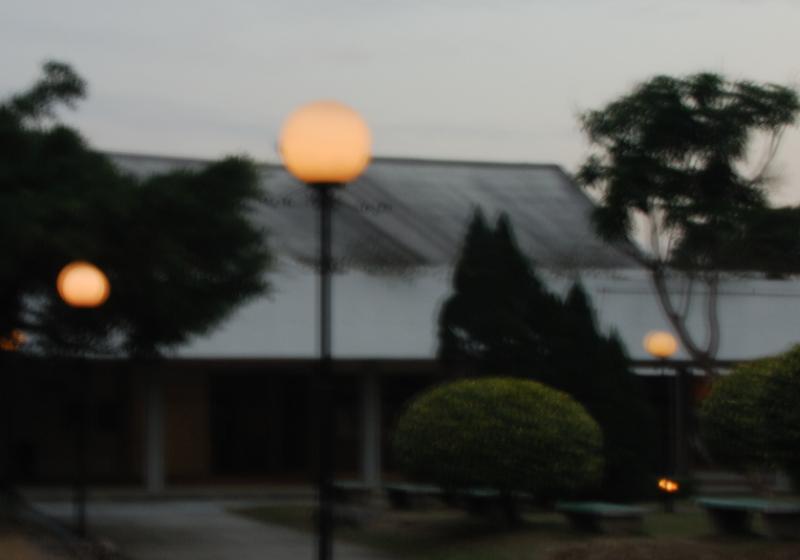}}
  &\multicolumn{3}{c}{\includegraphics[width=0.13\linewidth]{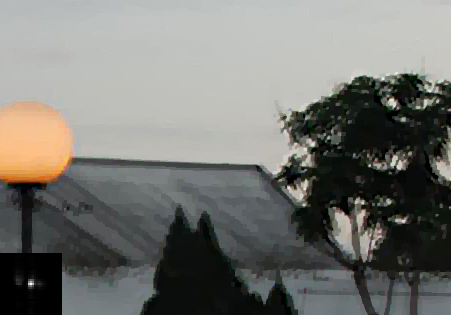}}
  &\multicolumn{3}{c}{\includegraphics[width=0.13\linewidth]{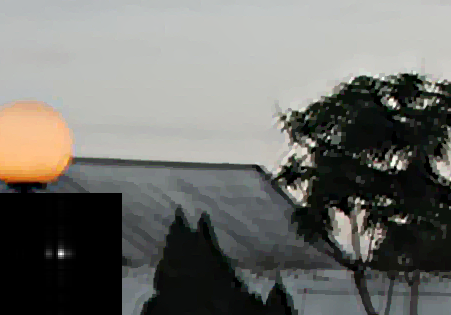}}
  &\multicolumn{3}{c}{\includegraphics[width=0.13\linewidth]{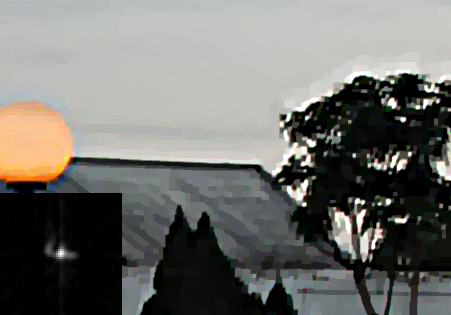}}
  &\multicolumn{3}{c}{\includegraphics[width=0.13\linewidth]{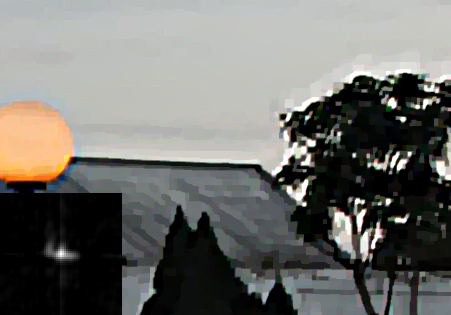}}
  &\multicolumn{3}{c}{\includegraphics[width=0.13\linewidth]{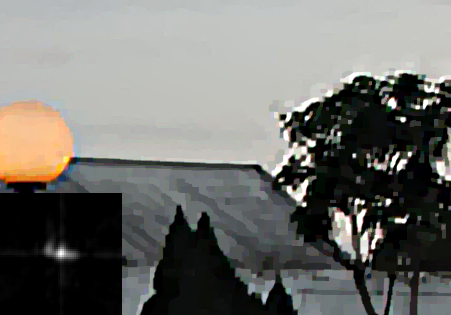}}
  &\multicolumn{3}{c}{\includegraphics[width=0.13\linewidth]{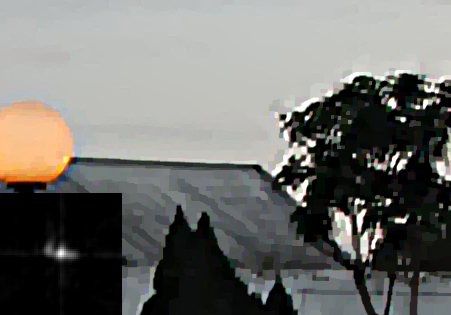}} \vspace{-1pt}\\

  \multicolumn{3}{c}{Blurry} 
  & \multicolumn{3}{c}{$31\times31$} 
  & \multicolumn{3}{c}{Low rank} 
  & \multicolumn{3}{c}{None} 
  & \multicolumn{3}{c}{$\ell_2$}
  & \multicolumn{3}{c}{$\ell_1$} 
  & \multicolumn{3}{c}{$\ell_\alpha$} \vspace{10pt}\\

\end{tabular}\\
   \caption{Comparison of different kernel priors on real-world images. Large kernel size is $61\times61$. It's recommended to zoom in.}
\label{fig9}
\end{figure*}

\begin{figure}[t]
\centering
   \setlength{\tabcolsep}{0.2pt}
  \small
  \begin{tabular}{cc}
  \includegraphics[width=0.49\linewidth]{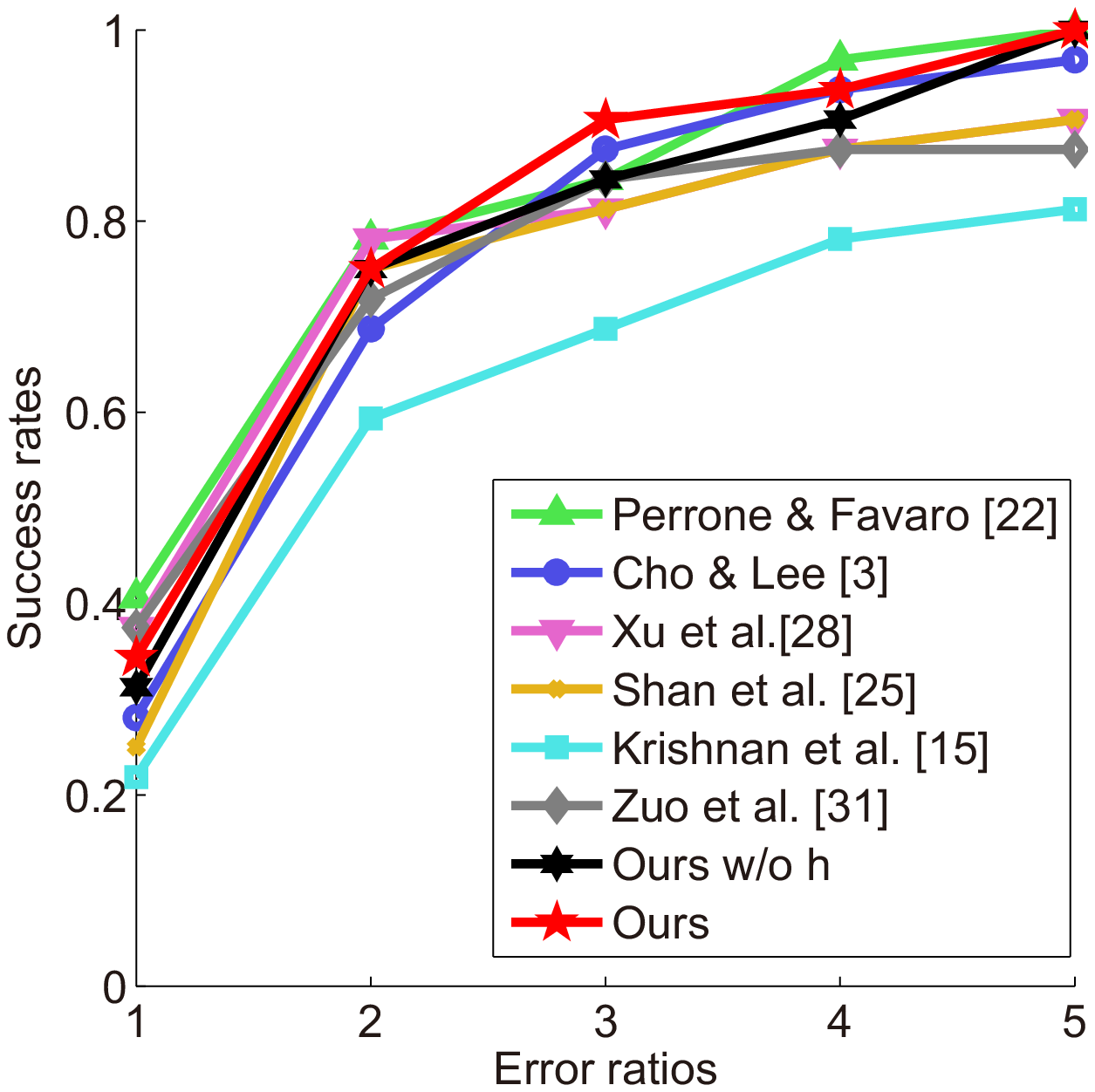}
  & \includegraphics[width=0.49\linewidth]{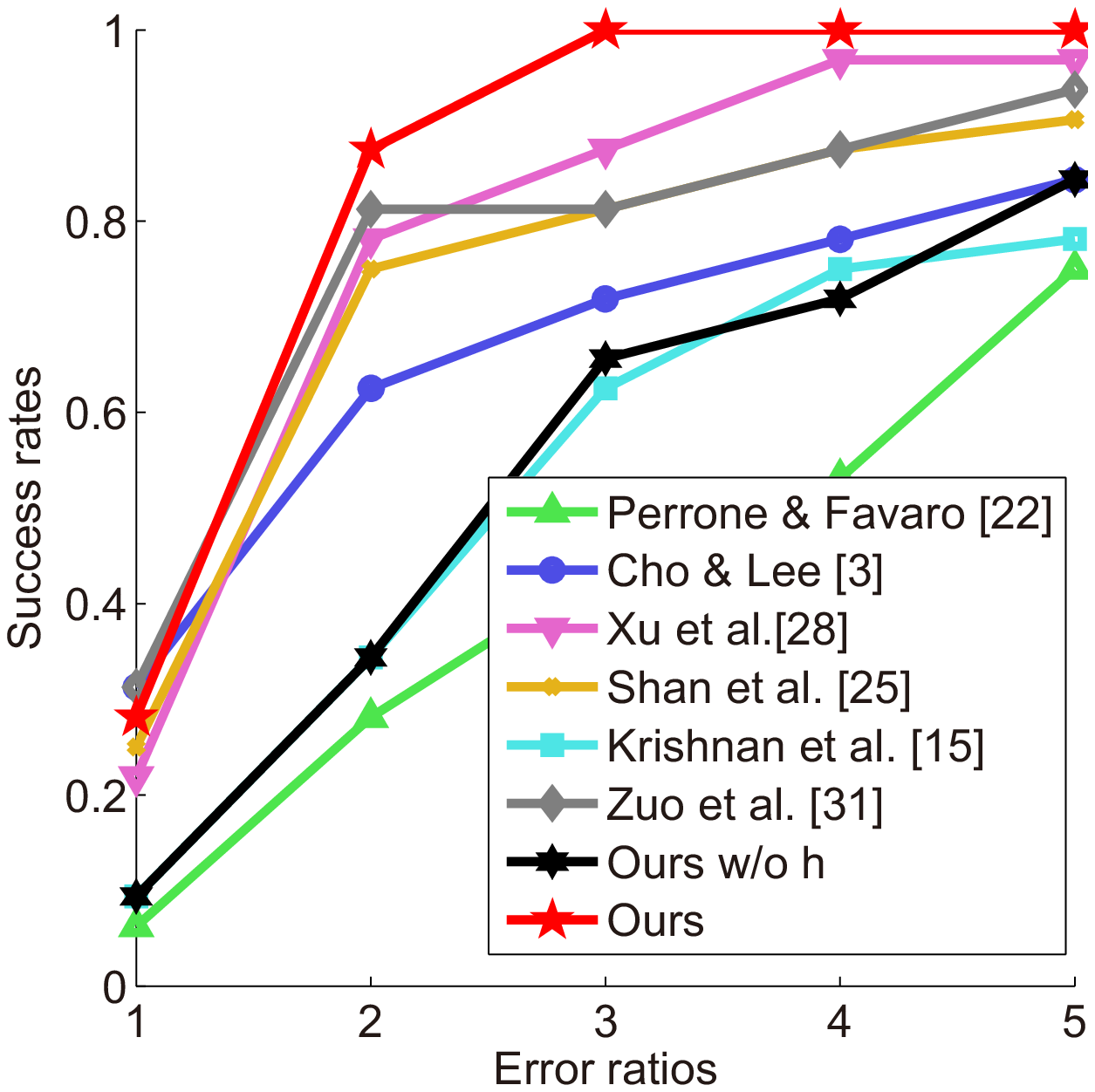}\\
  \small{(a)} & \small{(b)}\\ 
\end{tabular}\\
   \caption{Success rates on truth (a) and double (b) sizes. }
\label{fig10}
\end{figure}

\subsection{Effects of low rank-based regularization}
Corresponding to high error ratios of large kernels in Figure ~\ref{fig1}, 
we repeat the experiment using same parameters except $\mu$ and $\sigma$. Figure ~\ref{fig8} shows
low-rank regularized kernels are much more robust to kernel size. Noises in kernels are efficiently reduced and qualities of restored images are enhanced.
We further verify it on real-world images by imposing different regularization terms. As in Figure ~\ref{fig9}, blur kernels with low-rank regularization have less noises, while the others suffer from strong noises, yielding artifacts in the deblurring images. 
We note that in experiments of Figure ~\ref{fig8} and Figure ~\ref{fig9},  we deliberately omitted multi-scaling scheme to expose the effectiveness of low-rank regularization itself. 

\subsection{Evaluation on synthetic dataset}
The proposed method is quantitatively evaluated on dataset from~\cite{levin2009understanding}. Figure ~\ref{fig10} shows the success rates of state-of-the-art methods versus our implementations with and without (set $\mu$ and $\sigma$ zero) low-rank regularization. The average PSNRs in Figure ~\ref{fig10} with different sizes are compared in Table~\ref{table:psnr}. Parameters are fixed during the whole experiment:  $\sigma=1$, $\mu=1$, $\tau=5\times10^{-5}$, $OuterIterMax=20$, $CGIterMax=3$ and $innerIterMax=10$; a 7-layer multi-scaling pyramid is taken. Kernel elements smaller than 1/20 of the maximum are cut to zero, which is also taken in \cite{cho2009fast,krishnan2009fast}. Low-rank regularization works more effectively than the regularization-free implementation and the state-of-art.

\begin{table}[t]
\small
\centering 
\begin{tabular}{cccc} 

  

\hline
Method & prior & truth size & double size
\\
\hline
\cite{perrone2014total} &-- &  27.34 &23.29
\\\cite{cho2009fast} & $\ell_2$  &26.85 & 25.74
\\ \cite{xu2010two} & $\ell_2$  &26.91 & 26.71
\\ \cite{shan2008high}   & $\ell_1$  & 26.54 & 26.44
\\\cite{krishnan2011blind} & $\ell_1$  &25.34 & 23.95
\\\cite{zuo2015discriminative} & $\ell_\alpha$  &26.58 & 26.83
\\\hline
 $\sigma=0$ & -- & 26.68 & 23.85
\\$\sigma=1$ &$\log\det$ &  \textbf{27.36} & \textbf{27.47} \\ 
\hline

\end{tabular} \vspace{10pt} \\
\caption{Average PSNRs (dB) with truth and double sizes in experiments of Figure ~\ref{fig10}.} 
\label{table:psnr} 
\end{table}

\begin{figure*}[t!]
\begin{center}
  \setlength{\tabcolsep}{1pt}
  \small
  \begin{tabular}{cccc}
   \includegraphics[width=0.21\linewidth]{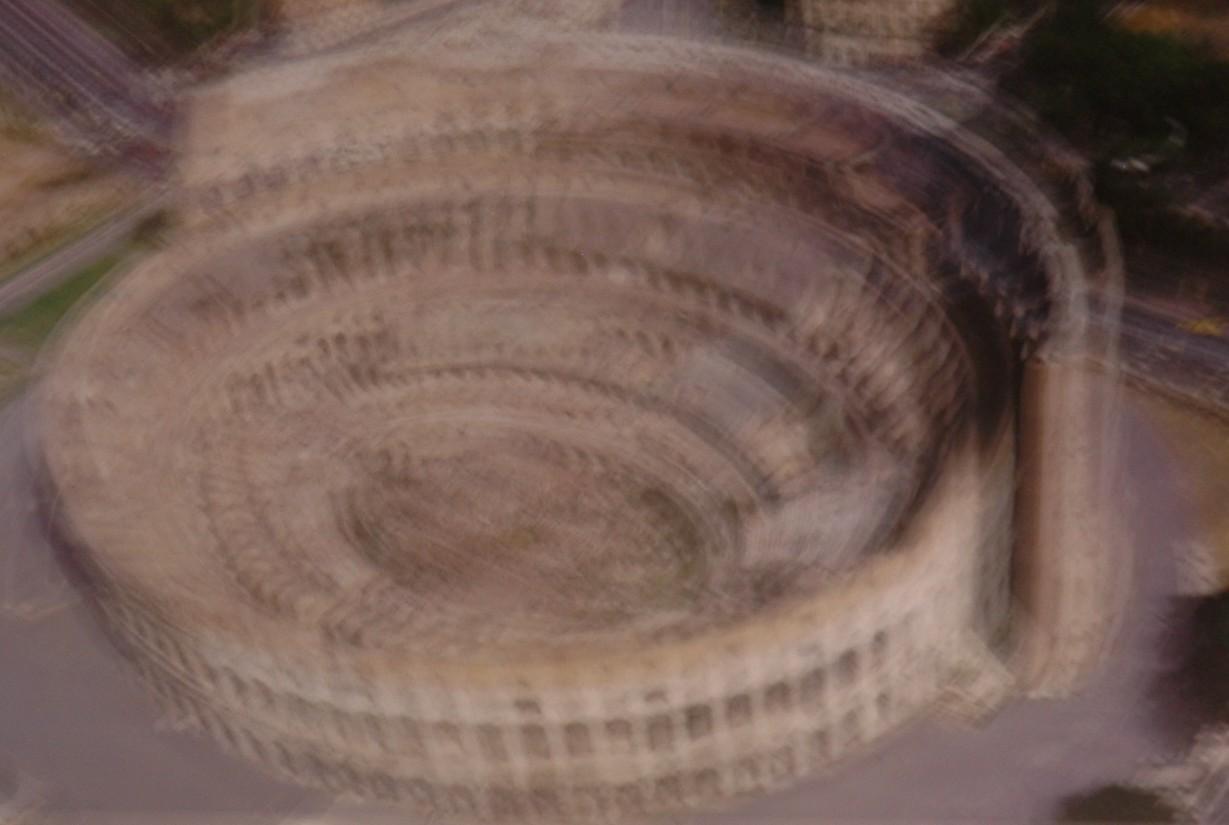}
  &\includegraphics[width=0.21\linewidth]{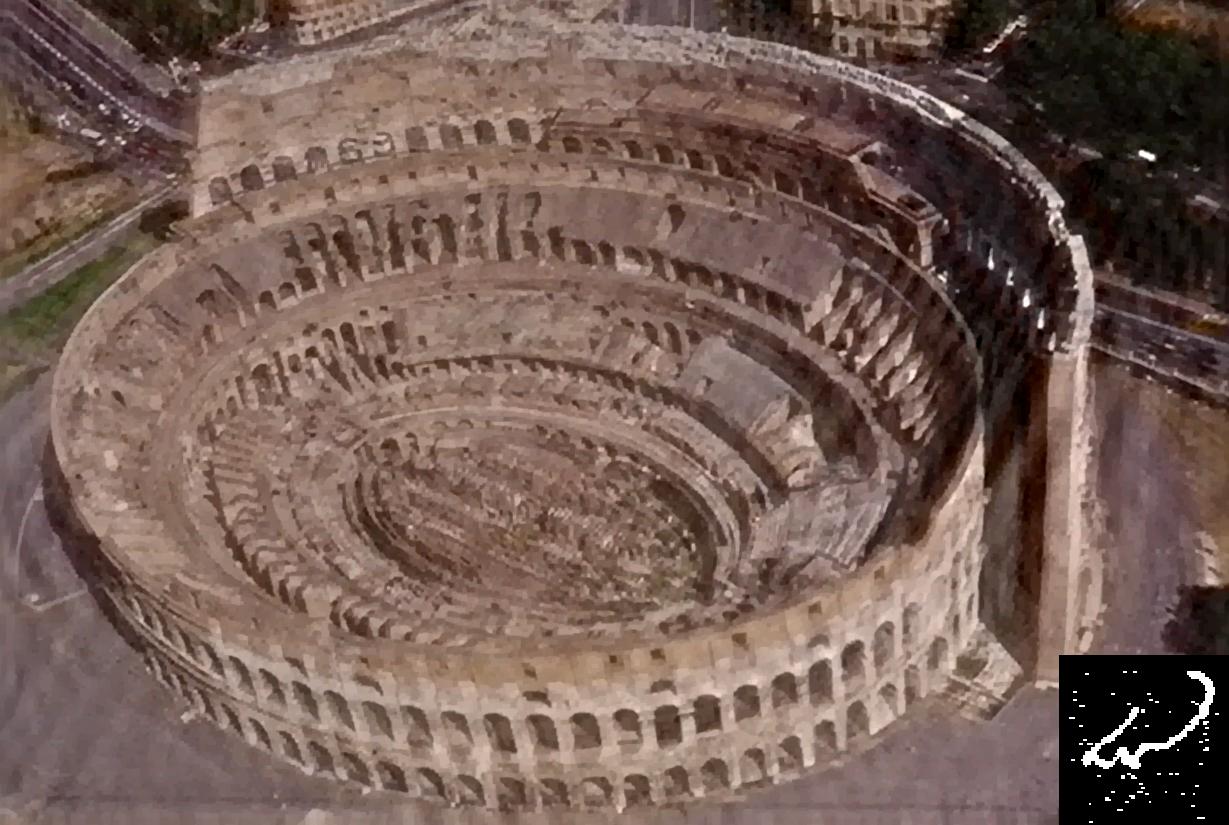}
  &\includegraphics[width=0.21\linewidth]{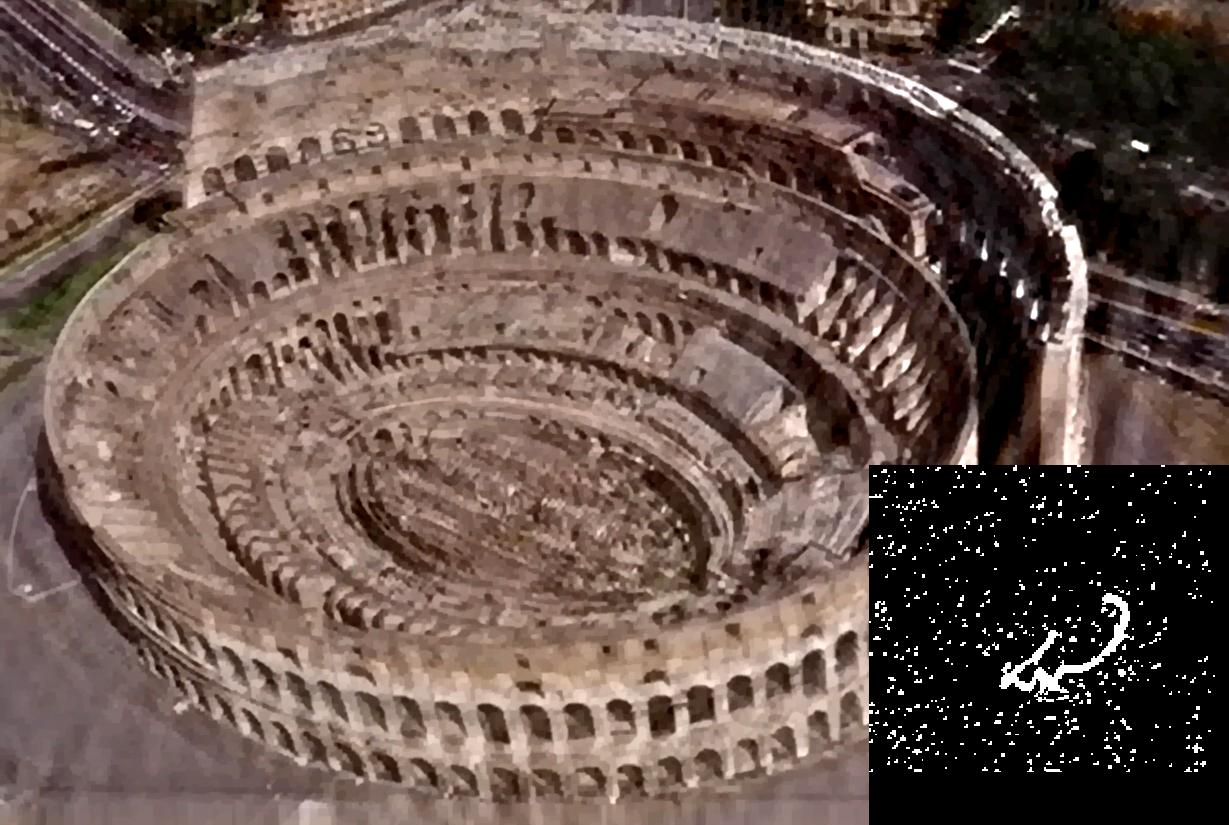} 
  &\includegraphics[width=0.21\linewidth]{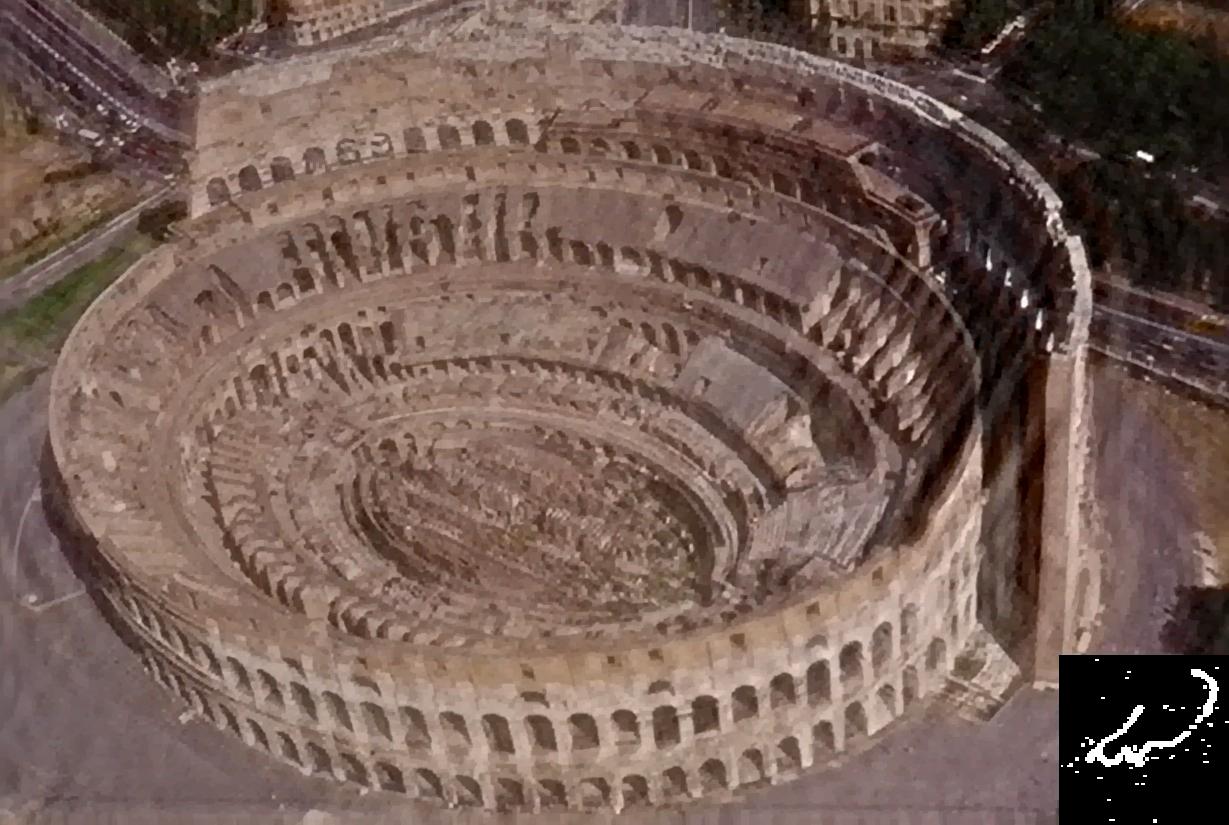}
  \vspace{-4pt}\\
  Blurry & $\ell_2$~\cite{cho2009fast} $85\times85$ & $\ell_2$~\cite{cho2009fast} $185\times185$ & $\ell_2$~\cite{xu2010two} $85\times85$\\
   \includegraphics[width=0.21\linewidth]{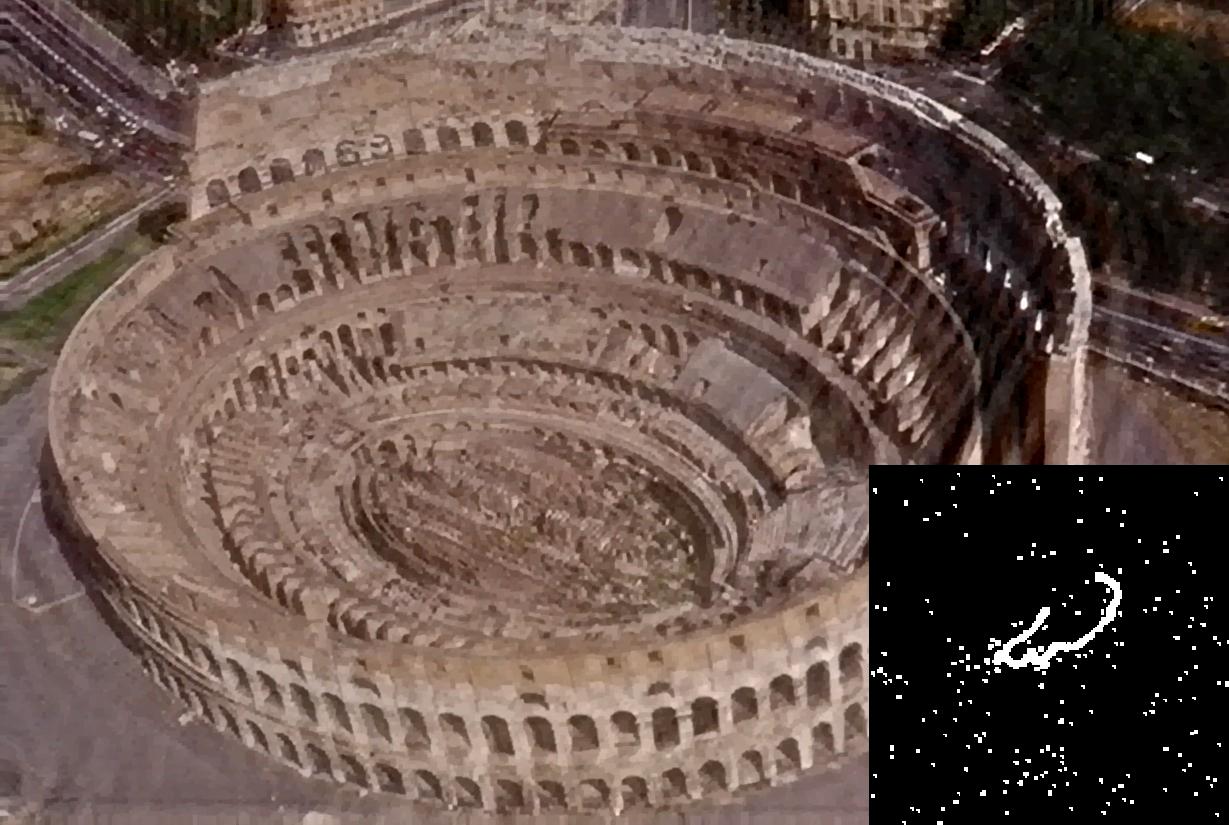}
  &\includegraphics[width=0.21\linewidth]{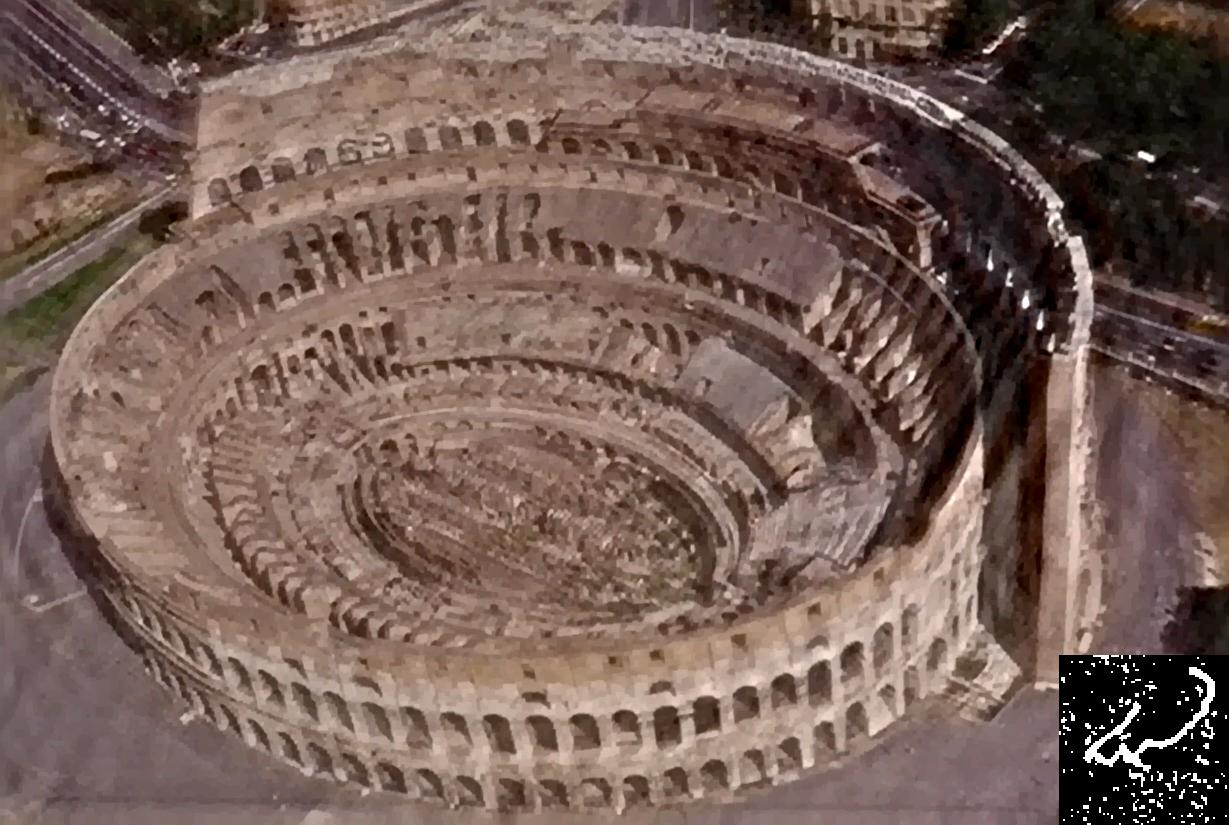}
  &\includegraphics[width=0.21\linewidth]{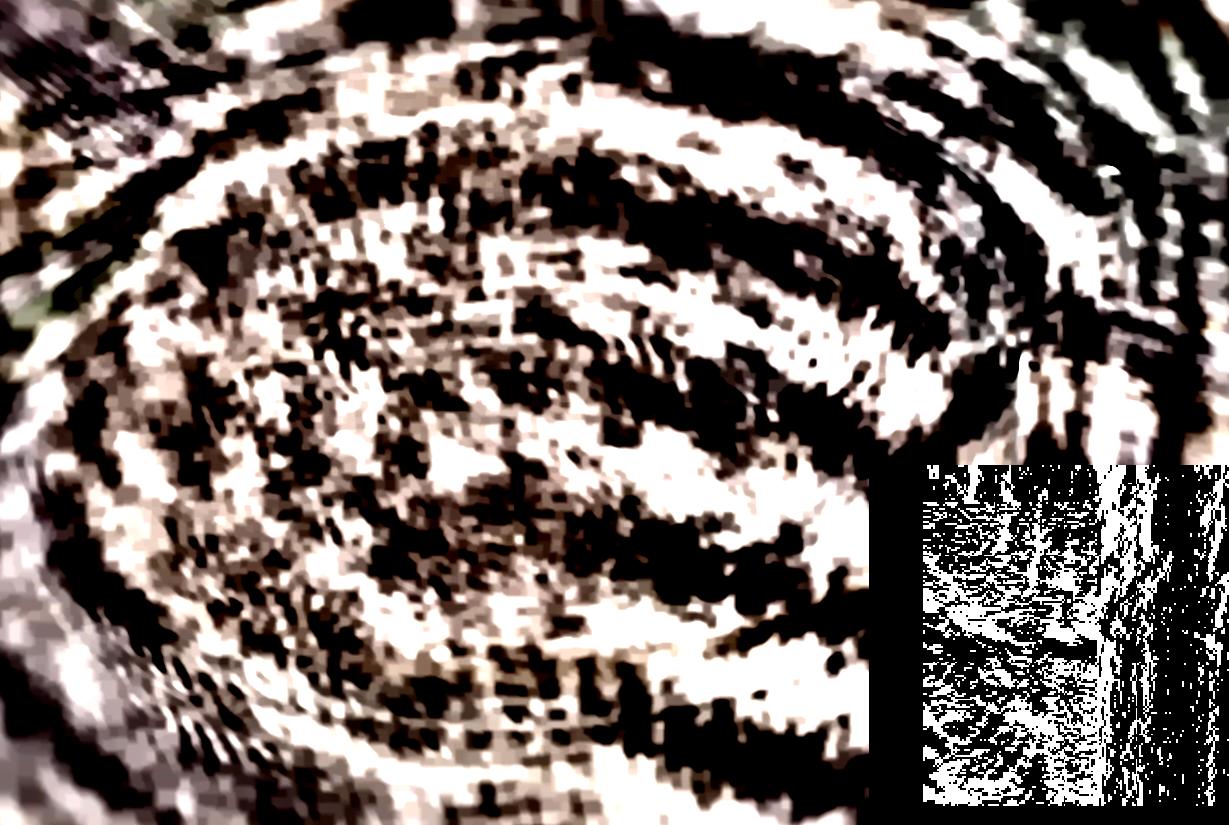}
  &\includegraphics[width=0.21\linewidth]{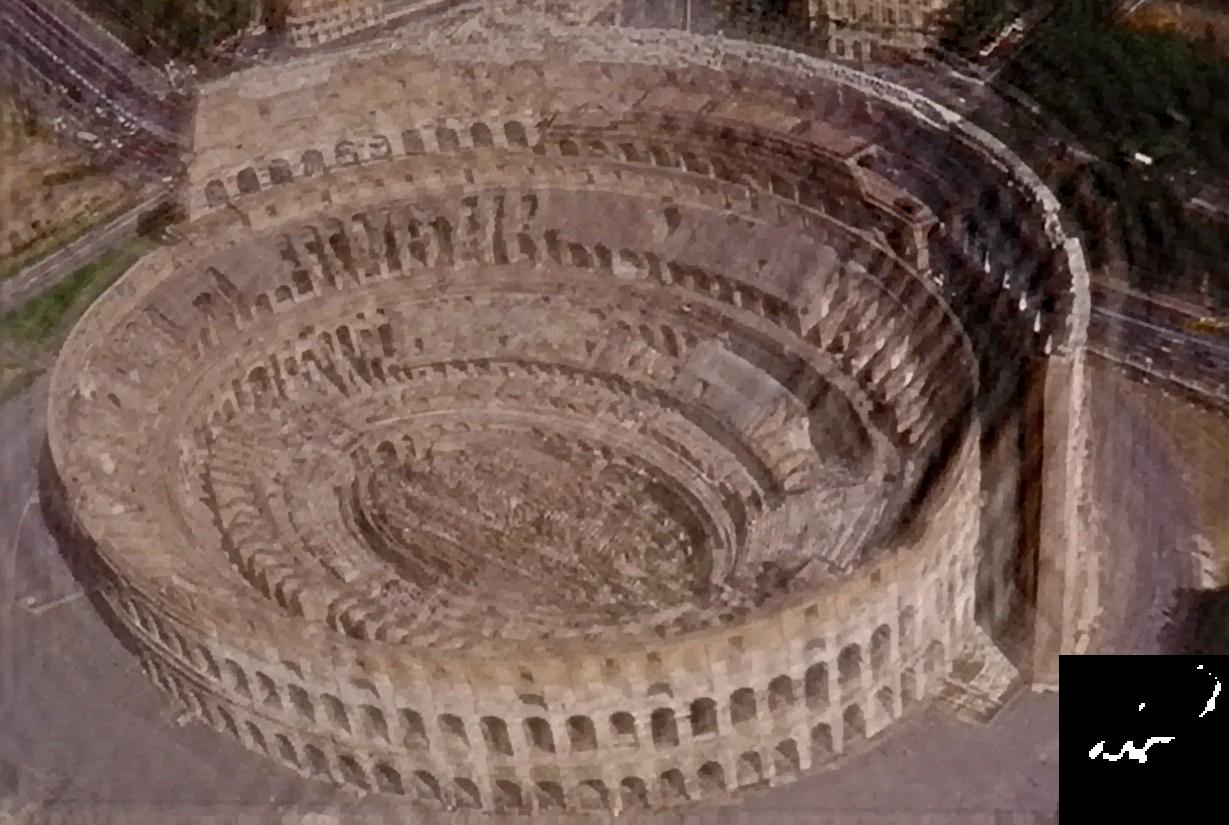} \vspace{-4pt}\\
  $\ell_2$~\cite{xu2010two} $185\time185$  & $\ell_1$~\cite{krishnan2011blind} $85\times85$ & $\ell_1$~\cite{krishnan2011blind} $185\time185$ & $\ell_\alpha$~\cite{zuo2015discriminative} $85\times85$ \\
   \includegraphics[width=0.21\linewidth]{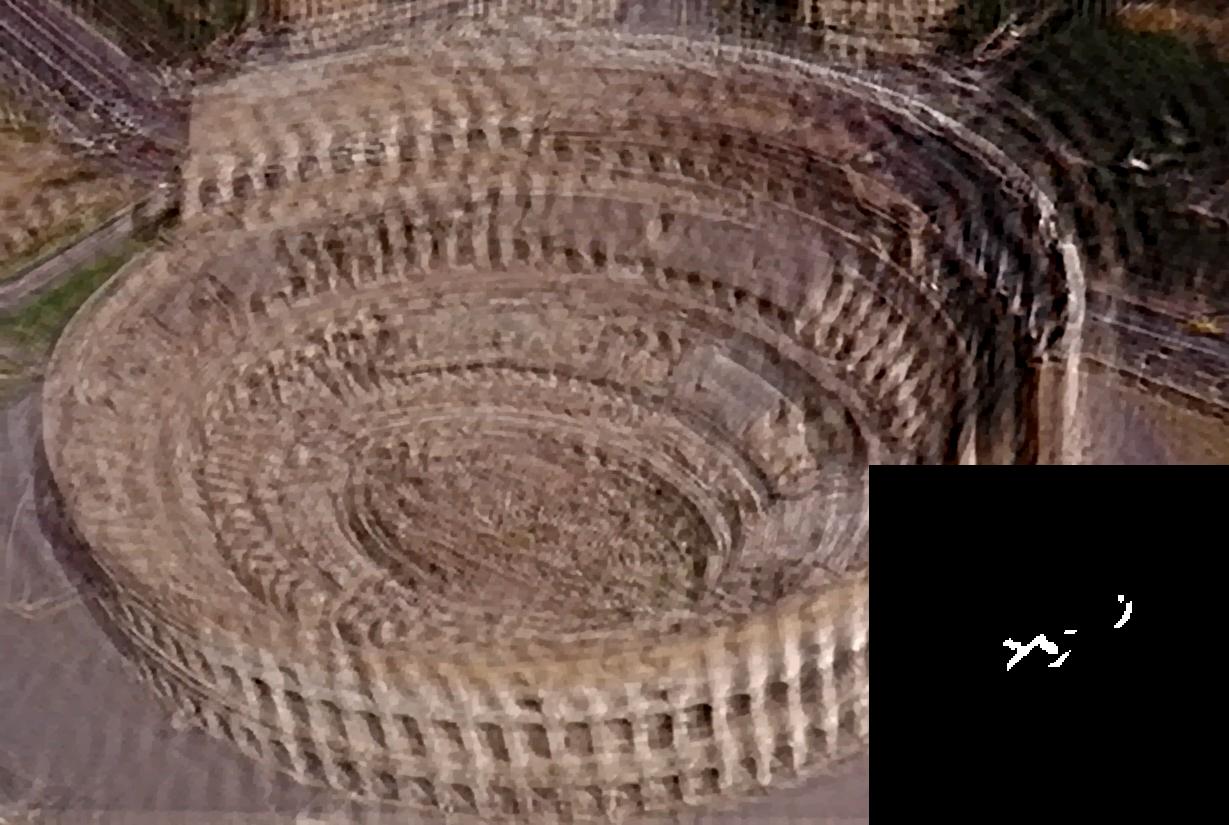}
  &\includegraphics[width=0.21\linewidth]{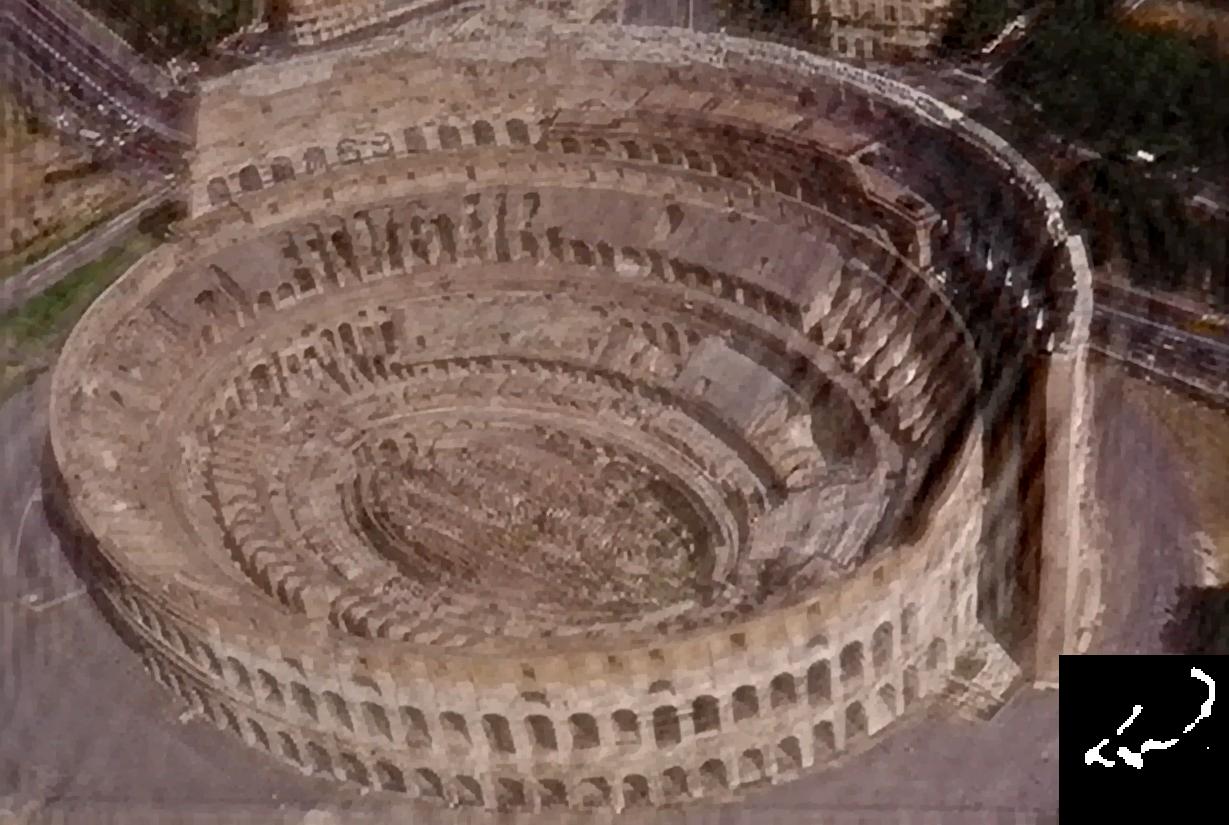}
  &\includegraphics[width=0.21\linewidth]{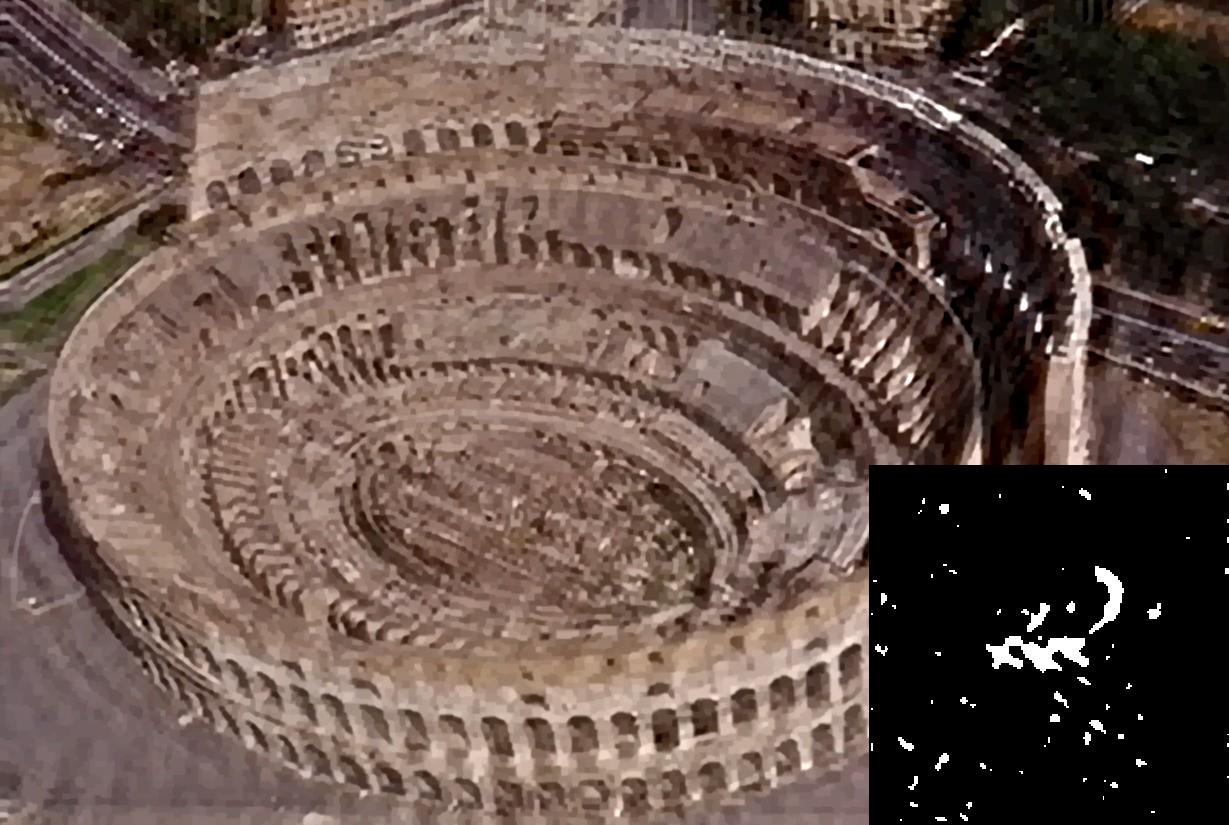}
  &\includegraphics[width=0.21\linewidth]{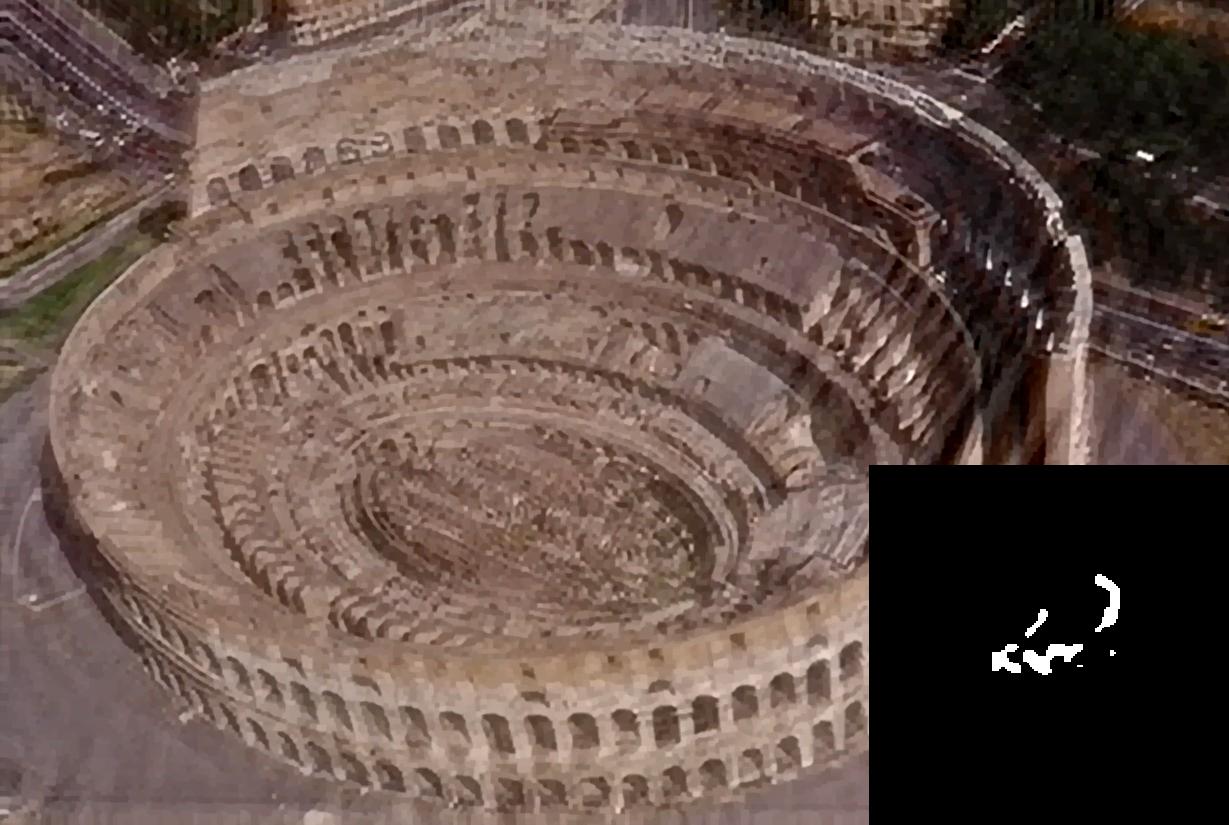} \vspace{-4pt}\\
  $\ell_\alpha$~\cite{zuo2015discriminative} $185\times185$ & Low rank (ours) $85\times85$ &None (ours) &Low rank (ours) $185\times185$\\
  \end{tabular}\\
\end{center}
   \caption{Test on real-world image \texttt{roma}. Each domain (positive parts) of estimated kernel is displayed at the bottom right corner of corresponding restored image.}
\label{fig11}
\end{figure*}

\begin{figure*}[t!]
\begin{center}
  \setlength{\tabcolsep}{2pt}
  \small
  \begin{tabular}{ccc}
   \includegraphics[width=0.26\linewidth]{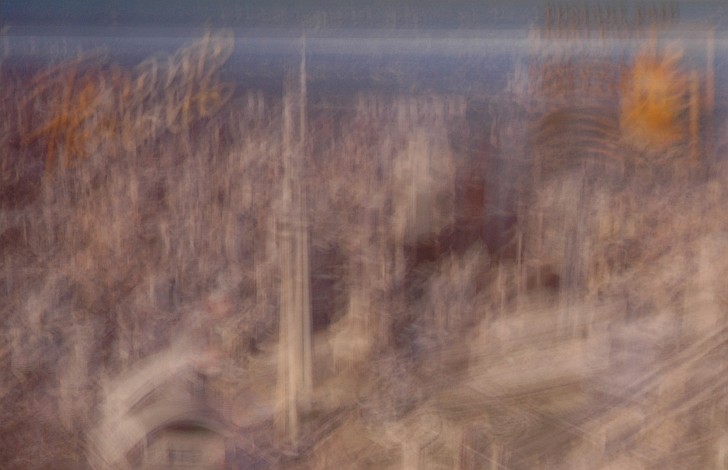}
  &\includegraphics[width=0.26\linewidth]{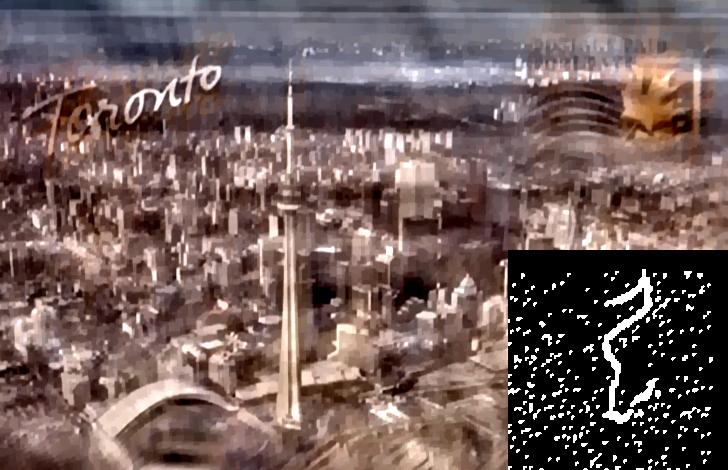}
  &\includegraphics[width=0.26\linewidth]{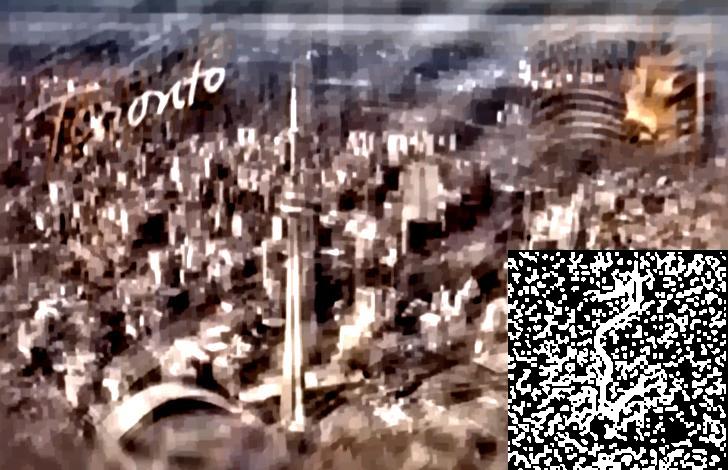}   \vspace{-2pt}\\
Blurry & $\ell_2$, 1/20 max threshold~\cite{cho2009fast} & $\ell_2$, heuristic domain detector~\cite{xu2010two} \\

   \includegraphics[width=0.26\linewidth]{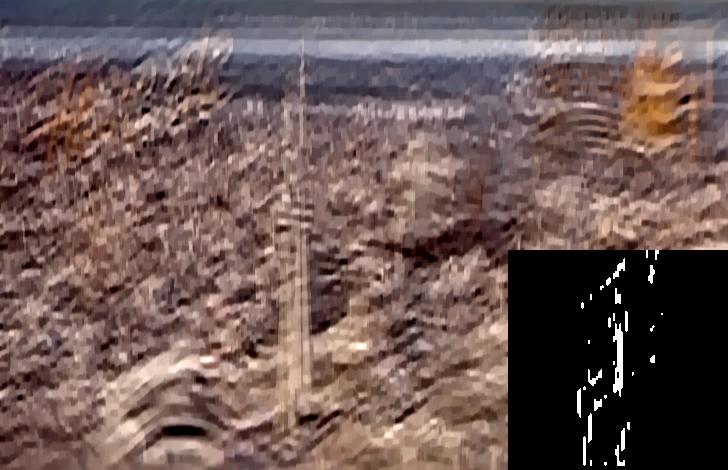}
  &\includegraphics[width=0.26\linewidth]{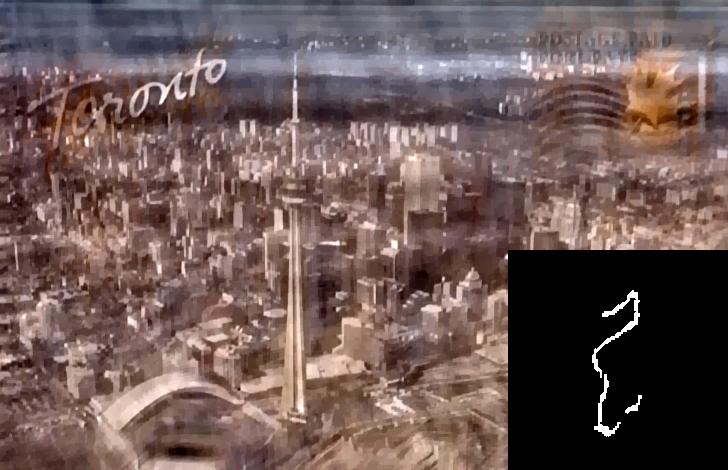}
  &\includegraphics[width=0.26\linewidth]{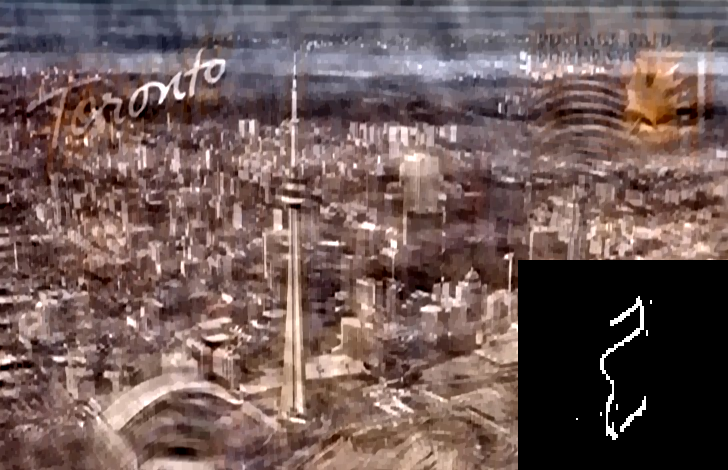}
  \vspace{-2pt}\\
  $\ell_1$~\cite{krishnan2011blind}, 1/20 max threshold & $\ell_\alpha$, none~\cite{zuo2015discriminative} & Low rank, 1/20 max threshold\\

  \end{tabular}\\
\end{center}
   \caption{Test on real-world image \texttt{postcard}. Kernel regularizations are listed under restored images.}
\label{fig12}
\end{figure*}

\subsection{Evaluation on real-world blurry images}
We compared our implementation to state-of-the-art methods on real-world images to reveal the robustness of low rank regularization on large kernel size. Specifically, \cite{xu2010two} takes a heuristic iterative support domain detector based on the differences of elements of $\hat k$, which is regarded to be more effective than 1/20 threshold. Figure ~\ref{fig11} shows that $185\times185$ size yields strong noises in estimated kernels of previous works~\cite{cho2009fast,xu2010two}, and even changes  main bodies of kernels~\cite{krishnan2011blind,zuo2015discriminative}. In contrast, low rank regularization can keep the kernel relatively stable for the larger size. One more comparison of different regularizations and refinement methods on large kernel size are shown in Figure~\ref{fig12}.  
As for computational efficiency of our method, it takes about 85s on a Lenovo ThinkCentre computer with Core i7 processor to process images with size  $255\times 255$. 

\section{Conclusion}
In this paper, we demonstrate that over-estimated kernel sizes produce increased noises in estimated kernel. We attribute the larger-kernel effect to the \textit{inflating effect}.  To reduce this effect, we propose a low-rank based regularization on kernel, which could suppress noise while remaining restored  main body of optimized kernel.

The success of blind deconvolution is contributed by many aspects. In practical implementations, even for noise-free $\mathbf y$, the intermediate $\mathbf{\hat x}^{(i)}$ is unlikely to iterate to ground truth, hence some parts of $\mathbf y$ will be treated as implicit noises, which may intensify the effect even more than expected and require future researches.

\section*{Acknowledgement}

This work is supported by the grants from the National Natural Science Foundation of China (61472043) and the National Key R\&D program of China (2017YFC1502505). 
We thank Ping Guo for constructive conversation.
Qian Yin is the corresponding author. 

\newpage
{\small
\bibliographystyle{ieee}
\bibliography{story}
}

\newpage

\section*{Supplementary File: Proof to Theorem 2}
Assume $M$ to be odd and $M = 2m+1$ ($m\in\mathbb{N}^+$). Then,
 \begin{equation}\label{2}
\begin{aligned}
  &T_{\bm{x}}(M) \\
  =&\left[\begin{matrix}
x_{m+1}&   \cdots&  x_2&    x_1&    0&      \cdots& 0       \\
\vdots &    &       \vdots& x_2&    x_1&    \ddots& \vdots  \\
x_{M-1}&    &       \vdots& \vdots& x_2&    \ddots& 0       \\
x_M&        \ddots& \vdots& \vdots& \vdots& \ddots& x_1       \\
0&        \ddots& x_{M-1}& \vdots& \vdots& & x_2       \\
\vdots&       \ddots& x_M& x_{M-1}& \vdots& & \vdots       \\
0&        \cdots& 0& x_M& x_{M-1}&  \cdots& x_{m+1}
\end{matrix}\right]
\end{aligned}
\end{equation}

\begin{equation}
\begin{aligned}
=&\left[ J^{(-m)}\bm{x} \;  \cdots \;J^{(-1)}\bm{x} \;\; \bm{x} \;\; J^{(1)}\bm{x} \;
\cdots \; J^{(m)}\bm{x}\right].
\end{aligned}
\end{equation}

For any $M$-by-$M$ matrix A, $\mathrm{rank}(A)=M$ i.f.f. $\det(A)\neq0$. Thus,
\begin{equation}\label{pr_trans}
  \Pr\left(\mathrm{rank}\left(T_{\bm{x}}\left(M\right)\right)=M\right)=\Pr\left(\det\left(T_{\bm{x}}\left(M\right)\right)\neq0\right).
\end{equation}

As we know, the explicit formula of determinant of a Toeplitz matrix on its elements is unsolved in the current literature. Li~\cite{RefA} gives a concrete expression of $\det\left(T_{\bm{x}}\left(M\right)\right)$ by using LU factorization but fails to fit all situations (\eg~when $x_{m+1} = 0$). However, it can be shown that $\det\left(T_{\bm{x}}\left(M\right)\right)$ equals a multivariate polynomial function without manipulating the whole expression. By using Laplace expansion on $\det\left(T_{\bm{x}}\left(M\right)\right)$, the item of largest degree is $x^M_{m+1}$ with factor 1.

\newtheorem*{lemma*}{Lemma}
\begin{lemma*}
Let $X$ be a continuous r.v. in the finite support domain [a, b]. Let $P:\mathbb{R}\rightarrow\mathbb{R}$ be a polynomial function
\begin{equation*}\label{f_def}
  P(x)=x^k+Q(x)
\end{equation*}
where $Q$ is a finite polynomial function with the largest degree less than $k$. Generate a new r.v.
\begin{equation*}\label{Z_def}
  Y=P(X).
\end{equation*}
Then, for $\forall {y\in\mathbb{R}}$, the Cumulative Distribution Function (CDF) $F_Y$ is continuous at y.
\end{lemma*}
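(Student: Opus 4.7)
My plan is to reduce continuity of $F_Y$ at $y$ to the statement $\Pr(Y=y)=0$, and then to show that the event $\{Y=y\}$ corresponds, via $P$, to $X$ landing in a finite (hence measure-zero) set.

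First I would invoke the standard characterization: a CDF $F_Y$ is continuous at a point $y$ if and only if the jump $F_Y(y)-F_Y(y^-)=\Pr(Y=y)$ vanishes. Thus it suffices to prove $\Pr(Y=y)=0$ for every $y\in\mathbb{R}$. Rewriting, $\Pr(Y=y)=\Pr(P(X)=y)=\Pr\bigl(X\in P^{-1}(\{y\})\bigr)$, so the task becomes controlling the preimage $P^{-1}(\{y\})$.

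Next I would observe that $P(x)-y=x^k+Q(x)-y$ is a polynomial in $x$ of exact degree $k$, because its leading term $x^k$ has coefficient $1\neq 0$ regardless of the value of $y$. By the fundamental theorem of algebra, such a polynomial has at most $k$ real roots, so $P^{-1}(\{y\})$ is a finite subset of $\mathbb{R}$ of cardinality at most $k$. Since $X$ is a continuous random variable on $[a,b]$ (interpreted as absolutely continuous, equipped with a probability density $f_X$), any finite set $S\subset\mathbb{R}$ satisfies
\begin{equation*}
\Pr(X\in S)=\int_S f_X(x)\,dx=0,
\end{equation*}
because $S$ has Lebesgue measure zero. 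Applying this with $S=P^{-1}(\{y\})$ gives $\Pr(Y=y)=0$, which is the desired continuity of $F_Y$ at $y$.

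I do not expect any real obstacle here; the only subtlety is making sure that $\deg(P(x)-y)=k$ for every $y$, which is immediate from the monic leading term, and that ``continuous r.v.'' is read in the sense of an absolutely continuous distribution so that singletons (and hence finite sets) are null. If instead only the weaker condition ``$F_X$ is continuous'' is assumed, the same conclusion still holds, since continuity of $F_X$ already implies $\Pr(X=x_0)=0$ for every $x_0$, and a finite union of null events is null.
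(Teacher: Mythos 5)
Your proposal is correct and rests on the same key fact as the paper's proof: for any $y$, the level set $\{x : P(x)=y\}$ is finite because $P(x)-y$ is a nonconstant (monic, degree-$k$) polynomial, hence has Lebesgue measure zero under the density of $X$, so the jump of $F_Y$ at $y$ vanishes. The only difference is packaging --- you invoke the standard identity $F_Y(y)-F_Y(y^-)=\Pr(Y=y)$ directly, whereas the paper reconstructs it by computing the one-sided limits of $\int_{\Xi(\xi)} f_X(x)\,dx$ via monotone convergence --- so the two arguments are essentially identical.
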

\begin{proof}
\begin{equation*}\label{lproof}
F_Y(y) = \Pr(Y<=y)=\int_{\Xi(y)}f_X(x)dx
\end{equation*}
where $\Xi(y)=\left\{x|P(x)<=y,  x\in\mathbb{R}\right\}$.

For $\forall{y\in\mathbb{R}}$,
\begin{equation*}
  \Xi(y^+) = \Xi(y)
\end{equation*}
and
\begin{equation*}
  \Xi(y^-) = \Xi(y)\setminus\Omega(y)
\end{equation*}
where $\Omega(y)=\left\{x|P(x)-y=0\right\}$.

Based on Beppo Levi's Theorem,
\begin{equation*}
  \lim_{\xi\rightarrow y^+}\int_{\Xi(\xi)}f_X(x)dx = \int_{\Xi(y)}f_X(x)dx.
\end{equation*}
Because $P(x)\not\equiv c$ ($c$ is a constant), for $\forall {y \in \mathbb{R}}$, zeros of $P(x)-y$ are finite, hence the Lebesgue measure of $\Omega(y)$ is zero. We have
\begin{equation*}
  \lim_{\xi\rightarrow y^-}\int_{\Xi(\xi)}f_X(x)dx = \int_{\Xi(y)}f_X(x)dx.
\end{equation*}
Thus
\begin{equation*}
  F_Y(y^+) = F_Y(y^-)=F(y).
\end{equation*}
\end{proof}

\newtheorem*{thm*}{Theorem 2}
\begin{thm*}
Let $X$ be a continuous r.v. with PDF
\begin{equation*}
  f_X(x) = \begin{cases}
\beta \exp\left(-\gamma |x|^{\alpha}\right) & x \in [-1, 1] \\
0 & otherwise.
\end{cases}
\end{equation*}
For a sample of independent observations $X_1,\ldots, X_M$, generate a new r.v.
\begin{equation*}
\begin{aligned}
  &Z= \\
  &\det\left[\begin{matrix}
X_{m+1}&   \cdots&  X_2&    X_1&    0&      \cdots& 0       \\
\vdots &    &       \vdots& X_2&    X_1&    \ddots& \vdots  \\
X_{M-1}&    &       \vdots& \vdots& X_2&    \ddots& 0       \\
X_M&        \ddots& \vdots& \vdots& \vdots& \ddots& X_1       \\
0&        \ddots& X_{M-1}& \vdots& \vdots& & X_2       \\
\vdots&       \ddots& X_M& X_{M-1}& \vdots& & \vdots       \\
0&        \cdots& 0& X_M& X_{M-1}&  \cdots& X_{m+1}
\end{matrix}\right].
\end{aligned}
\end{equation*}
Then,
\begin{equation*}
  \Pr\left(Z=0\right) = 0.
\end{equation*}
\end{thm*}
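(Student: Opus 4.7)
The plan is to condition on all coordinates except $X_{m+1}$, view $Z$ as a polynomial in the single variable $X_{m+1}$ of exact degree $M$ with monic leading term, and then invoke the Lemma to conclude that the conditional distribution of $Z$ assigns zero mass to $\{0\}$. Integrating the conditional probability by Fubini--Tonelli will then give $\Pr(Z=0)=0$.

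First, I would verify the algebraic fact that, as a polynomial in $X_{m+1}$ alone (with the remaining $X_i$ treated as parameters), one has
$$Z \;=\; X_{m+1}^M + Q(X_{m+1}), \qquad \deg Q < M.$$
This follows from the Leibniz formula
$$Z = \sum_{\sigma \in S_M} \mathrm{sgn}(\sigma) \prod_{i=1}^M T_{i,\sigma(i)},$$
together with the explicit shift structure of the matrix: the $(i,j)$ entry equals $X_{m+1+i-j}$ when $m+1+i-j \in \{1,\ldots,M\}$, and is zero otherwise. In particular, $T_{i,\sigma(i)} = X_{m+1}$ if and only if $\sigma(i) = i$, so the identity permutation is the unique $\sigma$ whose product contributes the monomial $X_{m+1}^M$, with coefficient $+1$. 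Any other permutation moves some index, producing a factor $X_j$ with $j \neq m+1$ and therefore contributing a monomial of strictly smaller degree in $X_{m+1}$; no cancellation of the leading $X_{m+1}^M$ is possible.

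Next, fixing the other $M-1$ coordinates, I would apply the Lemma with $k=M$ and $P(t)=t^M+Q(t)$ to the random variable $X_{m+1}$, whose density on $[-1,1]$ satisfies the continuity hypothesis. The Lemma yields that the conditional CDF of $Z$ is continuous at every real $y$, hence
$$\Pr\bigl(Z = 0 \,\big|\, X_1,\ldots,X_m,X_{m+2},\ldots,X_M\bigr) = 0$$
almost surely. Taking expectation over the remaining variables via Fubini--Tonelli then gives the desired $\Pr(Z=0)=0$.

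The hard part will be the bookkeeping in the leading-term step: one must justify both the uniqueness of the identity permutation in producing $X_{m+1}^M$, and the absence of any further monomial of degree $M$ in $X_{m+1}$ that could cancel the leading $+1$. Both follow cleanly from the rigid shift structure $T_{i,j}=X_{m+1+i-j}$, but they should be argued carefully since this is where the determinant's algebraic form is used decisively. Once the polynomial decomposition $Z=X_{m+1}^M+Q(X_{m+1})$ is in hand, the Lemma delivers the probabilistic conclusion essentially for free, and conditioning on any other single variable (e.g.\ $X_M$, whose leading contribution in a Laplace expansion along the last row is also easy to isolate) would work equally well as an alternative route.
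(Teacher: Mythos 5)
Your proposal is correct and follows essentially the same route as the paper: fix all variables except $X_{m+1}$, identify the determinant as a monic degree-$M$ polynomial $X_{m+1}^M+Q(X_{m+1})$, invoke the Lemma to get continuity of the conditional CDF at $0$, and integrate over the remaining variables. Your Leibniz-formula justification of the leading term is just a more explicit version of the paper's Laplace-expansion remark, and your Fubini--Tonelli step plays the role of the paper's total-probability/dominated-convergence argument.
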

\begin{proof}
Based on the Law of Total Probability and Dominated Convergence Theorem,
\begin{equation*}
\begin{aligned}
   &&&\Pr\left(Z=0\right) \\
 &=&&F_Z(0)-F_Z(0^-)\\
   &=&& \int\displaylimits_{-\infty}^{\infty} \dotsi \int\displaylimits_{-\infty}^{\infty} \int\displaylimits_{-\infty}^\infty \dotsi \int\displaylimits_{-\infty}^\infty F_{Y\left(
   \xi_1, \ldots, \xi_m, \xi_{m+2}, \ldots, \xi_{M}\right)}(0) \\
   &&&f_X(\xi_1) \dotsm f_X(\xi_m) f_X(\xi_{m + 2}) \dotsm f_X(\xi_M)\\
   &&&\mathrm{d} \xi_1 \dotsm \mathrm{d} \xi_m \, \mathrm{d} \xi_{m + 2} \dotsm \mathrm{d} \xi_M\\
   &-&&\lim_{z\rightarrow0^-}\int\displaylimits_{-\infty}^\infty \dotsi \int\displaylimits_{-\infty}^\infty \int\displaylimits_{-\infty}^\infty \dotsi \int\displaylimits_{-\infty}^\infty F_{Y\left(
   \xi_1, \ldots, \xi_m, \xi_{m+2}, \ldots, \xi_{M}\right)}(z) \\
   &&&f_X(\xi_1) \dotsm f_X(\xi_m) f_X(\xi_{m + 2}) \dotsm f_X(\xi_M)\\
   &&&\mathrm{d} \xi_1 \dotsm \mathrm{d} \xi_m \, \mathrm{d} \xi_{m + 2} \dotsm \mathrm{d} \xi_M
   \end{aligned}
\end{equation*}
\begin{equation*}
\begin{aligned}
   &=&&\int\displaylimits_{-\infty}^\infty \dotsi \int\displaylimits_{-\infty}^\infty \int\displaylimits_{-\infty}^\infty \dotsi \int\displaylimits_{-\infty}^\infty \big( F_{Y\left(
   \xi_1, \ldots, \xi_m, \xi_{m+2}, \ldots, \xi_{M}\right)}(0)\\
   &&&-F_{Y\left(
   \xi_1, \ldots, \xi_m, \xi_{m+2}, \ldots, \xi_{M}\right)}(0^-)\big) f_X(\xi_1) \dotsm f_X(\xi_m)\\
   &&& f_X(\xi_{m + 2}) \dotsm f_X(\xi_M)\mathrm{d} \xi_1 \dotsm \mathrm{d} \xi_m \, \mathrm{d} \xi_{m + 2} \dotsm \mathrm{d} \xi_M
\end{aligned}
\end{equation*}
where
\begin{equation*}
\begin{aligned}
  &&&Y\left(
   \xi_1, \ldots, \xi_m, \xi_{m+2}, \xi_{M}\right) \\
  &&=&\det\left[\begin{matrix}
X_{m+1}&   \cdots&  \xi_2&    \xi_1&    0&      \cdots& 0       \\
\vdots &    &       \vdots& \xi_2&    \xi_1&    \ddots& \vdots  \\
\xi_{M-1}&    &       \vdots& \vdots& \xi_2&    \ddots& 0       \\
\xi_M&        \ddots& \vdots& \vdots& \vdots& \ddots& \xi_1       \\
0&        \ddots& \xi_{M-1}& \vdots& \vdots& & \xi_2       \\
\vdots&       \ddots& \xi_M& \xi_{M-1}& \vdots& & \vdots       \\
0&        \cdots& 0& \xi_M& \xi_{M-1}&  \cdots& X_{m+1}
\end{matrix}\right]\\
\\
&&=&X_{m+1}^M+ Q_{\xi_1, \,\dots, \,\xi_m, \,\xi_{m+2}, \,\ldots, \,\xi_M}\left(X_{m+1}\right).
\end{aligned}
\end{equation*}

$Q_{\xi_1, \,\dots, \,\xi_m, \,\xi_{m+2}, \,\ldots, \,\xi_M}$ is a polynomial function with the largest degree less than $M$. Based on Lemma, we have
\begin{equation*}
  F_{Y\left(
   \xi_1, \ldots, \xi_m, \xi_{m+2}, \ldots, \xi_{M}\right)}(0)-F_{Y\left(\xi_1, \ldots, \xi_m, \xi_{m+2}, \ldots, \xi_{M}\right)}(0^-) = 0.
\end{equation*}

Hence,
\begin{equation*}
  \Pr(Z=0).
\end{equation*}
\end{proof}

\end{document}